\documentclass[10pt]{article} 
\usepackage[preprint]{tmlr}


\usepackage{amsmath,amsfonts,bm, amssymb}

\usepackage[framemethod=TikZ]{mdframed}
\mdfdefinestyle{MyFrame}{%
    linecolor=black,
    outerlinewidth=.3pt,
    roundcorner=5pt,
    innertopmargin=1pt, 
    innerbottommargin=1pt, 
    innerrightmargin=1pt,
    innerleftmargin=1pt,
    backgroundcolor=black!0!white}

\mdfdefinestyle{MyFrame2}{%
    linecolor=white,
    outerlinewidth=1pt,
    roundcorner=2pt,
    innertopmargin=\baselineskip,
    innerbottommargin=\baselineskip,
    innerrightmargin=10pt,
    innerleftmargin=10pt,
    backgroundcolor=black!3!white}

\newcommand{\E}{\mathcal{E}}

\newcommand{\R}{\mathcal{R}}



\newcommand{\xhdr}[1]{{\noindent\bfseries #1}.}
\newcommand{\cut}[1]{}
\newcommand{\CITE}{\textcolor{red}{CITE}}

\newcommand{\removelatexerror}{\let\@latex@error\@gobble}







\def\eqref#1{equation~\ref{#1}}









\def\1{\bm{1}}


\def\ra{{\textnormal{a}}}


\def\rx{{\textnormal{x}}}


\def\rva{{\mathbf{a}}}

\def\erva{{\textnormal{a}}}

\def\ervx{{\textnormal{x}}}

\def\rmA{{\mathbf{A}}}



\def\vmu{{\bm{\mu}}}
\def\vtheta{{\bm{\theta}}}
\def\va{{\bm{a}}}

\def\ve{{\bm{e}}}

\def\vx{{\bm{x}}}


\def\eva{{a}}

\def\mA{{\bm{A}}}

\def\mH{{\bm{H}}}
\def\mI{{\bm{I}}}
\def\mJ{{\bm{J}}}

\def\mX{{\bm{X}}}

\def\mSigma{{\bm{\Sigma}}}

\DeclareMathAlphabet{\mathsfit}{\encodingdefault}{\sfdefault}{m}{sl}
\SetMathAlphabet{\mathsfit}{bold}{\encodingdefault}{\sfdefault}{bx}{n}
\newcommand{\tens}[1]{\bm{\mathsfit{#1}}}
\def\tA{{\tens{A}}}

\def\tX{{\tens{X}}}


\def\gG{{\mathcal{G}}}

\def\sA{{\mathbb{A}}}
\def\sB{{\mathbb{B}}}


\def\sS{{\mathbb{S}}}


\def\emA{{A}}

\newcommand{\etens}[1]{\mathsfit{#1}}

\def\etA{{\etens{A}}}







\newcommand{\KL}{D_{\mathrm{KL}}}
\newcommand{\Var}{\mathrm{Var}}

\newcommand{\Cov}{\mathrm{Cov}}

\newcommand{\normltwo}{L^2}
\newcommand{\normlp}{L^p}

\newcommand{\parents}{Pa} 

\usepackage{hyperref}
\usepackage{url}
\usepackage{wrapfig}
\usepackage{booktabs} 
\usepackage{multirow}
\usepackage{natbib}
\usepackage{mathtools}
\usepackage{graphicx}
\usepackage{xcolor}

\usepackage{subfigure}
\usepackage{amsmath}
\usepackage{amsfonts}
\usepackage{enumitem}
\usepackage{amsthm}
\newtheorem{theorem}{Theorem}
\theoremstyle{definition}
\newtheorem{definition}{Definition}[section]

\theoremstyle{definition}

\newtheorem{prop}{Proposition}

\newtheorem{remark}{Remark}
\newenvironment{proofsketch}{%
  \proof}{\endproof}

\newcommand{\marcus}[1]{{\textbf{\color{blue}{Marcus: #1}}}}

\title{Equivariant Finite Normalizing Flows}


\author{\name Avishek Joey Bose \email joey.bose@mail.mcgill.ca \\
      \addr McGill University, Mila
      \AND
      \name Marcus Brubaker \email mbrubake@yorku.ca \\
      \addr York University, Vector Institute
      \AND
      \name Ivan Kobyzev \email ivanuskobbus@gmail.com\\
      \addr Huawei Noah's Ark Lab \\
}



\begin{document}

\maketitle

\begin{abstract}
Generative modelling seeks to uncover the underlying factors that give rise to observed data that can often be modeled as the natural symmetries that manifest themselves through invariances and equivariances to certain transformation laws. However, current approaches to representing these symmetries are couched in the formalism of continuous normalizing flows that require the construction of equivariant vector fields---inhibiting their simple application to conventional higher dimensional generative modelling domains like natural images.
In this paper, we focus on building equivariant normalizing flows using discrete layers. We first theoretically prove the existence of an equivariant map for compact groups whose actions are on compact spaces. We further introduce three new equivariant flows: $G$-Residual Flows, $G$-Coupling Flows, and $G$-Inverse Autoregressive Flows that elevate classical Residual, Coupling, and Inverse Autoregressive Flows with equivariant maps to a prescribed group $G$. Our construction of $G$-Residual Flows are also universal, in the sense that we prove an $G$-equivariant diffeomorphism can be exactly mapped by a $G$-residual flow. Finally, we complement our theoretical insights with demonstrative experiments---for the first time---on image datasets like CIFAR-10 and show $G$-Equivariant Finite Normalizing flows lead to increased data efficiency, faster convergence, and improved likelihood estimates.
\end{abstract}

\section{Introduction}
\label{intro}
Many data generating processes are known to have underlying symmetries that govern the resulting data.
Indeed, understanding symmetries within data can lead to powerful insights such as conservation laws in physics that are a direct result of the celebrated Noether's theorem and the understanding of natural equivariances of structured objects such as sequences, sets, and graphs.
Representing such geometric priors as inductive biases in deep learning architectures has been a core design principle in many equivariant neural network models leading to large gains in parameter efficiency and convergence speeds in supervised learning tasks \citep{cohen2016steerable, weiler2019general}.
Perhaps more importantly, equivariant models also enjoy a data-independent form of generalization as they are guaranteed to respect the encoded equivariances regardless of the amount of data available during training.

For generative modelling tasks like exact density estimation, the development of equivariant architectures currently requires the careful design of invariant potential functions whose gradients give rise to equivariant time-dependent vector fields.
These methods belong to a larger family of models called Continuous Normalizing Flows (CNFs) \citep{chen2018neural} and, while elegant in theory, impose several theoretical and practical limitations.
For instance, each equivariant map within a CNF must be globally Lipschitz, and in practice, we need to use off-the-shelf ODE solvers for forward integration which may require hundreds of vector field evaluations to reach a suitable level of numerical accuracy.
CNFs are also susceptible to other sources of numerical errors such as computing gradients via backward integration which is prone to producing noisy gradients that lead to computationally expensive training times and inferior final results \citep{gholami2019anode}.
Moreover, to calculate the log density requires computing the divergence of the vector field which for computational tractability is estimated via the Hutchinson's trace estimator whose variance scales linearly with dimension restricting its applicability to smaller systems in some applications \citep{kohler2020equivariant}.
Consequently, such numerical challenges inhibit the simple application of equivariant continuous flows to traditional generative modelling domains such as invariant density estimation over images that are of significantly higher dimensions than previously considered datasets. 

Normalizing flows composed of finite layers ---i.e. each layer is an invertible map, are an attractive alternative to CNFs as they do not, in general, suffer the aforementioned numerical challenges nor do they require external ODE solvers.
However, imbuing symmetries within previously proposed flow layers is highly non-trivial, as for certain architectures the equivariance and invertibility constraints may be incompatible. 
Furthermore, even if both constraints are satisfied the resulting function class modelled by the flow may not be universal---diminishing its ease of application when compared to non-equivariant finite flows.
As a result, one may ask whether it is even possible to build equivariant finite layers that are also invertible such that classical finite normalizing flows (e.g. RealNVP, Inverse Autoregressive Flows, Residual Flows, etc ...) can be imbued with powerful inductive biases that preserve the natural symmetries found in data?

\xhdr{Present work} In this paper, we consider the general problem of building equivariant diffeomorphisms as finite layers in an Equivariant Finite Normalizing Flow (EFNF). To do so, we first ask the fundamental question---an open problem despite the existence of continuous equivariant flows---of whether such a map even exists between two invariant measures. We affirmatively answer this open question when both the group $G$ and the space on which it acts are compact.
Leveraging our theoretical insights we design $G$-residual and $G$-Affine Coupling Flows that are not only invertible but are also equivariant by construction.
Moreover, our $G$-Affine Coupling allows constructing equivariant instantiations of two popular flow architectures in RealNVP \citep{dinh2016density} and the Inverse Autoregressive Flow \citep{kingma2016improved}.

In sharp contrast with past efforts on equivariant flows our proposed layers are designed in the language of Steerable CNNs which defines a very general notion of equivariant maps via convolutions on homogeneous spaces, of which $\mathbb{R}^n$ is only one example. Consequently, our proposed flows are more naturally equipped to handle conventional datasets in generative modelling such as images which are invariant to subgroups of the Euclidean group in two dimensions, $E(2)$. On a theoretical front we study the representational capabilities of $G$-Residual flows, and show that any equivariant diffeomorphism on $\mathbb{R}^n$ can be exactly mapped by a corresponding $G$-residual flow when operating on an augmented space ---i.e. zero padding the input. Our key contributions are summarized as follows:

\begin{itemize}
    \item We propose two novel $G$-equivariant layers: $G$-Affine coupling and $G$-residual using which we build $G$-equivariant finite normalizing flows.
    \item We prove the existence of an equivariant diffeomorphism between two invariant measures when the group and the space it acts on are compact.
    \item We study the representational capabilities of our proposed equivariant finite flows.  We show, by example, that while $G$-coupling flows cannot be applied to all input data types; in contrast $G$-residual flows are universal. Specifically, we prove the universality of $G$-residual flows by demonstrating that any $G$-equivariant diffeomorphism on $\mathbb{R}^n$ can be exactly represented by a $G$-residual flow.
    \item We conduct experiments on both synthetic invariant densities and higher dimensional image datasets such as Rotation MNIST and augmented CIFAR10 and highlight the benefits of EFNFs. 
\end{itemize}

\section{Background and Preliminaries}
\subsection{Equivariance and Linear Steerability}

In this section, we briefly review the necessary facts regarding equivariance. First, recall the main definition:
\begin{definition}
Let $X$ and $Y$ be two sets with an action of a group $G$. A map $\phi: X \to Y$ is called $G$-equivariant, if it respects the action, i.e., $g\cdot \phi(x) = \phi(g \cdot x), \forall g\in G$ and $x \in X$.  A map $\chi: X \to Y$ is called $G$-invariant, if $\chi(x) = \chi(g \cdot x), \forall g\in G$ and $x \in X$.
\end{definition}

Modelling equivariant maps becomes important when we have a group action on the input and we want to ``preserve" this action during the transformation. For example, if we have an image, we would like its intermediate representations to transform accordingly every time we rotate the image itself. \citet{cohen2016steerable} introduced steerable CNNs satisfying this property. In particular, an (ideal) image can be thought as a function $f: \mathbb{R}^2 \to \mathbb{R}^K$, where $K$ is a number of channels. The set of all possible  images $\mathcal{F}$ form a vector space.\cut{The group of symmetries of the Euclidean plane is Euclidean group $E(2)$ (all translations, rotations, and reflections), but in this work we will mostly concentrate on orthogonal group $O(2)$ and its finite subgroups.}  
Having a two dimensional representation of a group $G$,  one can define its action on the set of all images
$\mathcal{F}$  by: 
$[g \cdot f](x) = f(g^{-1}x) $. Note that the defined action is linear. If one considers a feature space $\mathcal{F}'$ (a vector space with $G$-action), then a convolution layer $\phi: \mathcal{F} \to \mathcal{F}'$ is called steerable if it is $G$-equivariant; it's kernel $\kappa$ can be built using irreducible representations of $G$ \citep{weiler2019general}. 

\cut{
In this paper, we concern ourselves with the modeling of continuous symmetries which are also known as Lie groups. While a detailed treatment of Lie theory is beyond the scope of this note we focus on the more accessible matrix Lie groups which are subgroups of $GL(n, \mathbb{R})$, that is the group of all invertible square matrices of dimension $n$. For our purposes, we are interested in the matrix representation of Lie groups and their corresponding actions on a vector space $V$. That is a representation maps a group element $g \in G$ to a matrix $R_g \in \mathbb{R}^{n \times n}$. It is through the action of $g$ on $V$, via its representation $R_g$, that the group $G$ affects the data generation process. It should also be noted that we are most concerned with linear representations $R_g \in GL(n)$ but non-linear representations do exist e.g. the tensor product representation. In particular, we require the representations to be unitary ---i.e. $| \text{det} (R_g)| = 1 $ Furthermore, the vector space $V$ can taken to be $\mathbb{R}^n$ in which case we are seek to model the symmetries of the plane or the Euclidean group $E(n)$. This is strictly a simplifying assumption and in general $V$ can be an arbitrary manifold or a homogeneous space for the group but many applications in modern deep learning consider densities as being defined on $\mathbb{R}^n$ and as a result understanding equivariance here is a natural starting point. We now examine the necessary conditions for invariance and equivariance when working with probability densities.

An element $g \in G$, where $G$ is a Lie group is called a symmetry of a density if $p(R_g x) = p(x)$ for all $x \in \mathbb{R}^n$. That is the density $p$ is invariant with respect to the group $G$ action on the input if this condition holds for $\forall g \in G$. Similarly, equivariance to a transformation arises when if the output commutes with the group action ---i.e. $p(R_g x) = (R_g)_* p(x)$. We are interested in constructing generative models that respect the invariance properties of given symmetries inherent in these densities. To be precise we wish to endow both prior and target densities with invariances to a particular Lie group $G$ that is known \textit{a priori} to the practitioner. We now state a known theorem from the literature that helps establish the necessary criterion to build discrete equivariant normalizing flows.  
}

\subsection{Normalizing Flows}
In this section, we recall the most relevant constructions of Normalizing Flows and set notations.
A more extensive overview of Normalizing Flows is given in \citep{kobyzev_survey, papamakarios2019normalizing}. 
To model a target density one can choose a simple base density and then transform it with a parameterized diffeomorphism (a differentiable invertible map whose inverse is also differentiable).  
A \textit{normalizing flow} is such a way to model a diffeomorphism such that its Jacobian determinant and inverse are easily computable (i.e., faster than $O(n^3)$). 
More formally, starting from a sample from a base distribution, $z \sim q(z)$, and a diffeomorphism $f: \mathbb{R}^n \to \mathbb{R}^n$, then the density $p(z')$ of $z' = f(z)$ is determined by the chain rule as:  
\begin{align}
    \log p(z') = \log q(z) - \log \det \Big \lvert \frac{\partial f}{\partial z} \Big \rvert.
    \label{flow_eqn_1}
\end{align}
There are multiple ways to construct normalizing flows. In this work we consider flows which are compositions of finite number of elementary diffeomorphisms ($f = f_K \circ f_{K-1} ... \circ f_1$)---so-called \textit{discrete flows}---and whose output density is determined by: $\ln p(z_K) = \ln p(z_0) - \sum_{k=1}^K\ln \det \Big \lvert \frac{\partial f_k}{\partial z_{k-1}} \Big \rvert$.
We will also focus on the case when each $f_i$ is an affine flow \citep{dinh2016density, rezende2015variational} due to their simplicity and expressiveness in capturing complex distributions.

\xhdr{RealNVP}
\cite{dinh2016density} introduced real-valued non-volume preserving (RealNVP) flows, where affine coupling layers are stacked together.  These layers update a part of the input vector using a function that is simple to invert,
but which depends on the remainder of the input vector in a complex way.
Formally, given an $n$-dimensional input $x$ and $d < n$, the output $y$ of an affine coupling layer $\mathcal{C}: \mathbb{R}^n \to \mathbb{R}^n$ is given by:
\begin{align}
    y_{1:d} & = x_{1:d} \\
    \label{eqn:coupling_layer_def}
    y_{d+1:D} & = x_{d+1:D} \odot \textnormal{exp}(s(x_{1:d})) + t(x_{1:d}).
\end{align}
One can immediately write its inverse which itself is another coupling layer. For the remainder of the paper we will use $\mathcal{C} = \textsc{concat}(\mathcal{C}_1, \mathcal{C}_2)$ to refer to coupling layer operations on the first partition (identity map) and second partition of the input (\eqref{eqn:coupling_layer_def}) respectively.

\xhdr{Inverse Autogressive Flows}
Affine Coupling flows using the RealNVP architectures enjoy computationally efficient evaluation and sampling but often require a longer chain of transformations in comparison to other flow-based models to achieve comparable performance. An extension of the RealNVP architecture which transforms the entire input in a single pass while iterating over all feature indices is the Inverse Autoregressive Flow (IAF) model \citep{kingma2016improved}. Specifically, an IAF $\mathcal{I}$ model with Affine Coupling can be defined over each index $i$, using again scale and translation networks $s$ and $t$, as follows:
\begin{equation}
    \mathcal{I}(x)_i = s_i(x_{<i}) \cdot x_i + t_i(x_{<i}).
\end{equation}
As the $i$-th input depends on all previous indices $<i$ the Jacobian of an IAF layer is lower triangular and can be computed in linear time.

\xhdr{Residual Normalizing Flows} An orthogonal approach to designing expressive finite normalizing flows is to instead consider residual networks as a form of Euler discretization of an ODE. 
\begin{align}
    x_{t+1} = x_t + h_{t}(x_t)  
\end{align}
Here, $x_t$ represents the activations at a given layer $t$ (or time). A sufficient condition for invertibilty then is $\textnormal{Lip}(h_{t}) < 1 \ \forall t = 1, \dots, T$. A Resnet whose $h_{t}(x_t)$ satisfies the invertibility condition is known as an i-ResNet in the literature \cite{behrmann2019invertible}.

In practice, satisfying the Lipschitz constraint means constraining the spectral norm of any weight matrix ---i.e. $ \textnormal{Lip}(h) < 1$ \textit{if} $||W_i||_2 < 1$. While i-ResNets are guaranteed to be invertible there is no analytic form of the inverse. However, it can be obtained by a fixed-point iteration by using the output at a layer as a starting point for the iteration. Also, the computation of the Jacobian determinant can be estimated efficiently.

\section{Existence of Equivariant Diffeomorphisms}

In this paper we are most concerned with the generative modelling of symmetric densities defined over Euclidean spaces. More specifically, assume that we have a group $G$ acting on $\mathbb{R}^n$ and our goal is to model a $G$-invariant density $p$. 
Adopting the normalizing flows approach, one needs to understand how to pick the base density $q$ and how it should be pushed to the desired target $p$. 
The basic result opening the topic of equivariant normalizing flows is the following theorem.

\begin{mdframed}[style=MyFrame2]
\begin{theorem}\citep[Theorem 1]{papamakarios2019normalizing,kohler2020equivariant}
\label{t1}
Let $p$ be the density function of a flow-based model
with transformation $\phi : \mathbb{R}^n \rightarrow \mathbb{R}^n$  and base density $q$. If $\phi$ is $G$-equivariant map and $q$ is $G$-invariant density with respect to a group $G$, then $p$ is also $G$-invariant density on $\mathbb{R}^n$.
\end{theorem}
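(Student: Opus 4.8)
The plan is to verify the invariance property $p(g \cdot x) = p(x)$ directly, by feeding the two hypotheses—invariance of $q$ and equivariance of $\phi$—into the change-of-variables (pushforward) formula for $p$. Since $\phi$ is the diffeomorphism of a flow-based model, its induced density is given pointwise by
\begin{align}
    p(x) = q\big(\phi^{-1}(x)\big) \, \Big\lvert \det \tfrac{\partial \phi^{-1}}{\partial x}(x) \Big\rvert ,
    \label{eq:plan-cov}
\end{align}
so proving invariance of $p$ reduces to controlling how each of the two factors on the right transforms when $x$ is replaced by $g \cdot x$.

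First I would transfer equivariance of $\phi$ to its inverse: applying $\phi^{-1}$ to the identity $g \cdot \phi(z) = \phi(g \cdot z)$ and relabelling yields $\phi^{-1}(g \cdot x) = g \cdot \phi^{-1}(x)$ for all $g \in G$ and $x \in \mathbb{R}^n$. Substituting this into the first factor of \eqref{eq:plan-cov} and invoking $G$-invariance of $q$ gives
\begin{align}
    q\big(\phi^{-1}(g \cdot x)\big) = q\big(g \cdot \phi^{-1}(x)\big) = q\big(\phi^{-1}(x)\big),
    \label{eq:plan-qpart}
\end{align}
so the base-density factor is already invariant, with no use of the Jacobian yet.

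The hard part will be the Jacobian-determinant factor, where one might naively fear an unwanted factor of $\det R_g$ coming from the action. Writing the (linear) action of $g$ as the matrix $R_g$, so that $g \cdot z = R_g z$, I would differentiate the inverse-equivariance relation $\phi^{-1}(R_g x) = R_g \phi^{-1}(x)$ in $x$ via the chain rule to obtain the conjugation identity
\begin{align}
    \frac{\partial \phi^{-1}}{\partial x}(R_g x)\, R_g &= R_g\, \frac{\partial \phi^{-1}}{\partial x}(x), \\
    \frac{\partial \phi^{-1}}{\partial x}(R_g x) &= R_g\, \frac{\partial \phi^{-1}}{\partial x}(x)\, R_g^{-1}.
    \label{eq:plan-conj}
\end{align}
Taking determinants of \eqref{eq:plan-conj}, the factors $\det R_g$ and $\det R_g^{-1}$ cancel, so $\lvert \det \tfrac{\partial \phi^{-1}}{\partial x}(g \cdot x)\rvert = \lvert \det \tfrac{\partial \phi^{-1}}{\partial x}(x)\rvert$; I would emphasize that this cancellation is automatic and requires \emph{no} unitarity assumption on $R_g$. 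Combining this with \eqref{eq:plan-qpart} in \eqref{eq:plan-cov} gives $p(g \cdot x) = p(x)$, establishing the claim.

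Finally, I would remark on scope. The only place the structure of the action enters is the conjugation step \eqref{eq:plan-conj}, which uses that $z \mapsto g \cdot z$ is linear, so its Jacobian is the constant matrix $R_g$. For a general smooth action the same computation goes through provided the Jacobian-of-action factors evaluated at $x$ and at $\phi^{-1}(x)$ cancel against the corresponding term in the transformation law of an invariant density on a manifold; for the linear representations used throughout this paper, the clean cancellation above is exactly what makes the determinant term harmless, which is the crux of the argument.
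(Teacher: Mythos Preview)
Your argument is correct and is precisely the standard change-of-variables computation used to prove this result. Note, however, that the paper does not supply its own proof of this theorem: it simply states that ``the proof can be found in \citep{papamakarios2019normalizing} with an analogous result in \citep{kohler2020equivariant}.'' Your write-up matches the approach in those references---transfer equivariance to $\phi^{-1}$, use invariance of $q$ on the first factor, and obtain the Jacobian conjugation identity so the $\det R_g$ factors cancel---so there is nothing to contrast.
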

\end{mdframed}
The proof can be found in \citep{papamakarios2019normalizing} with an analogous result in \citep{kohler2020equivariant}. Theorem~\ref{t1} gives us a general recipe to construct an equivariant normalizing flow by choosing an appropriate invertible map $\phi$ that is equivariant with respect to $G$. A small technical consideration is the choice of base distribution $q$ which must be invariant with respect to $G$, for example the standard normal distribution is invariant to rotations about the origin and reflections. 

The next question one could ask is whether such a construction of an equivariant map $\phi$ is always possible. In particular, let $p$ and $q$ be two $G$-invariant  densities. Does there always exists a $G$-equivariant map $\phi$, that pushes forward one density into another (i.e., $p\mathbf{dx} = \phi_*(q\mathbf{dx})$)? We can prove a more restricted result in the case when the group $G$ is compact and the ambient space is not $\mathbb{R}^n$ but a compact manifold $\mathcal{M}$.

\begin{mdframed}[style=MyFrame2]
\begin{theorem}
\label{equivar_existence_theorem}
Let $G$ be a compact group with a smooth action on a connected compact smooth orientable closed manifold $\mathcal{M}$. Let $\mu$ and $\nu$ be two $G$-invariant volume forms representing the given orientation. Assume that $\int_\mathcal{M} \mu = \int_\mathcal{M} \nu$. Then there exists a $G$-equivariant diffeomorphism $\phi$, such that $\phi^* \mu = \nu$.
\end{theorem}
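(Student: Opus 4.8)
The plan is to adapt the classical Moser trick to the equivariant setting, the crucial new ingredient being an averaging argument over the normalized Haar measure of the compact group $G$. First I would connect $\nu$ to $\mu$ by the affine path of top-degree forms $\mu_t = (1-t)\,\nu + t\,\mu$ for $t \in [0,1]$. Since $\mu$ and $\nu$ both represent the given orientation, at every point they are positive multiples of a common oriented volume element, so each convex combination $\mu_t$ is again a nowhere-vanishing volume form; moreover, because $\mu$ and $\nu$ are $G$-invariant, every $\mu_t$ is $G$-invariant as well. The goal is to produce a smooth isotopy $\phi_t$ with $\phi_0 = \mathrm{id}$ and $\phi_t^* \mu_t = \nu$, for then $\phi := \phi_1$ satisfies $\phi^*\mu = \nu$.

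Differentiating $\phi_t^*\mu_t = \nu$ in $t$ and using Cartan's formula together with $d\mu_t = 0$ (top degree) reduces the problem to finding a time-dependent vector field $X_t$ solving $\mathcal{L}_{X_t}\mu_t = -\dot{\mu}_t = \nu - \mu$, i.e. $d\,\iota_{X_t}\mu_t = \nu - \mu$. Since $\mathcal{M}$ is closed, connected and orientable, $H^n_{\mathrm{dR}}(\mathcal{M}) \cong \mathbb{R}$ via integration, and the hypothesis $\int_\mathcal{M}\mu = \int_\mathcal{M}\nu$ gives $\int_\mathcal{M}(\mu - \nu) = 0$; hence $\mu - \nu = d\beta$ for some $(n-1)$-form $\beta$. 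The key step is now to make the primitive invariant: replacing $\beta$ by its average $\bar\beta = \int_G g^*\beta\, dg$ and using that $\mu - \nu$ is $G$-invariant gives $d\bar\beta = \int_G g^*(\mu-\nu)\,dg = \mu - \nu$, so $\bar\beta$ is a $G$-invariant primitive. It then suffices to define $X_t$ pointwise by $\iota_{X_t}\mu_t = -\bar\beta$, which has a unique smooth solution because contraction with a volume form is a pointwise isomorphism from vector fields to $(n-1)$-forms.

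It remains to check equivariance and existence of the flow. Applying $g^*$ to $\iota_{X_t}\mu_t = -\bar\beta$ and using the invariance of both $\mu_t$ and $\bar\beta$ shows $\iota_{(g^{-1})_* X_t}\mu_t = \iota_{X_t}\mu_t$; injectivity of contraction then forces $(g^{-1})_*X_t = X_t$, i.e. $X_t$ is a $G$-invariant vector field. Because $\mathcal{M}$ is compact, $X_t$ is complete, so its time-dependent flow $\phi_t$ exists for all $t \in [0,1]$; and since $g\circ\phi_t\circ g^{-1}$ is the flow of $g_*X_t = X_t$, uniqueness of solutions of ODEs yields $g\circ\phi_t = \phi_t\circ g$, so every $\phi_t$, and in particular $\phi = \phi_1$, is $G$-equivariant. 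I expect the main obstacle to be exactly this equivariant promotion of the Moser primitive: ordinary Moser theory supplies only some $\beta$, and one must verify that Haar averaging preserves both $d\bar\beta = \mu - \nu$ and smoothness while producing an invariant form. This is where compactness of $G$ is essential (so the averaging integral converges) and compactness of $\mathcal{M}$ is essential (for the cohomological identification $H^n_{\mathrm{dR}}(\mathcal{M})\cong\mathbb{R}$ and for completeness of the flow).
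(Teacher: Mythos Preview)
Your proposal is correct and follows essentially the same route as the paper: both are the equivariant Moser trick, using $H^n_{\mathrm{dR}}(\mathcal{M})\cong\mathbb{R}$ to produce a primitive, averaging it over the Haar measure of the compact group $G$ to make it $G$-invariant, and then solving $\iota_{X_t}\mu_t = -\bar\beta$ to obtain a $G$-invariant time-dependent vector field whose flow is the desired equivariant diffeomorphism. Your affine path $(1-t)\nu + t\mu$ is (up to the reparametrization $t\mapsto 1-t$) the same as the paper's $\mu + t\,d\eta$, and your write-up is in fact slightly more explicit than the paper's in justifying that each $\mu_t$ is a volume form and that the flow of a $G$-invariant vector field is $G$-equivariant.
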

\end{mdframed}

\begin{proofsketch}
The full proof is detailed in~\S\ref{equivar_existence_proof} while here we provide a proof sketch. The proof can be obtained as an equivariant modification of the Moser's trick \citep{Moser1965OnTV}. First, the equality of integrals implies the existence of the form $\eta$, such that $\nu = \mu + d\eta$. Without loss of generality, we can assume that the form $\eta$ is $G$-invariant (if not, we can average it by a group action and consider a new form instead). Then, we can connect the volume forms $\nu$ and $\mu$ by a segment:
$\mu_t = \mu + t d\eta $ for $t \in [0,1]$. We want to find an isotopy $\phi_t$, such that:
\begin{equation}
\label{eq:isotopy}
 \phi_t^* \mu_t = \mu_0. 
\end{equation}
As the manifold is compact, an isotopy can be generated by the flow of a time-dependent vector field $v_t$, and the desired \eqref{eq:isotopy} will be satisfied if
\begin{equation}
    i_{v_t} \mu_t = \eta.
\end{equation}
Since $\mu_t$ and $\eta$ are $G$-invariant, so is the vector field $v_t$. The integration of $v_t$ will result in an $G$-equivariant diffeomorphism as required.
\end{proofsketch}

Theorem \ref{equivar_existence_theorem} guarantees the existence of an equivariant diffeomorphism that pushes forward a base density to any desired target. Crucially, this means that if $\phi$ is within the representation capability of a chosen flow model then it justifies our goal of modelling invariant densities via equivariant diffeomorphisms. In section~\S\ref{g_residual_flows} we show that the equivariant diffeomorphism $\phi$ can always be represented exactly using a $G$-residual flow operating on an augmented space.

\cut{
\subsection{Exploiting Geometric Structure in Data}
In classical approaches to generative modelling (e.g. Normalizing Flows, Auto-regressive models, GANs, etc...) it is customary treat each input, as a vector residing in some $n$-dimensional space. For example, when working with images each data point $x \in \mathbb{R}^{c\times h \times w}$ may be considered as a vectorized input $\hat{x} = \textsc{Vec}(x)$ which is then processed by the model. However, a seemingly innocuous operation like vectorizing also destroys exploitable geometric structure such as---e.g. rotation equivariance---within the data hampering the overall generative modelling task. Fig. \ref{fig:2d_circle} illustrates this phenomena on an image of a planar circle which is rotated by $\pi/2$ rotations. Clearly, a planar-circle is invariant to any continuous rotations in $SO(2)$ as well as finite subgroups like $C_4$. We can now treat the $2d$-planar circle as a point in $\mathbb{R}^4$ by labelling each quadrant from $1$ to $4$. Now any rotation in $\mathbb{R}^4$ can be modelled as an element $g \in SO(4)$, but notice that the action of $\pi/2$ planar rotations corresponds to a specific permutation labels for the quadrants. In fact, these are the \emph{only} permissible permutations out of a possible of $4!$ permutations. As a result, two dimensional planar symmetries---like the ones found in certain images---are best modelled using actions of $E(2)$ rather than the larger group $E(n)$. 

\begin{wrapfigure}{o}{0.4\textwidth}
  \vspace{-15pt}
    \includegraphics[width=0.38\textwidth]{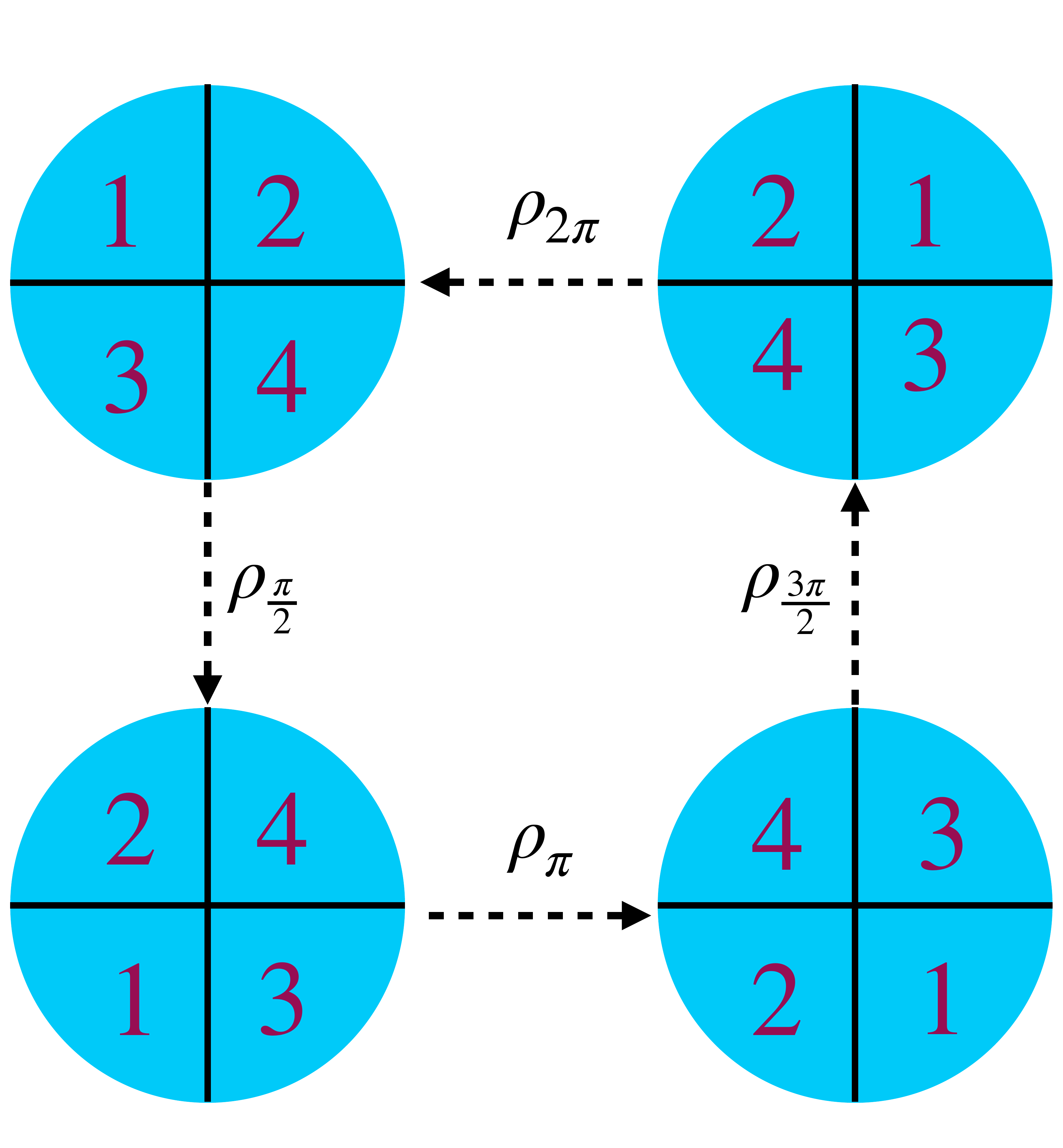}
  \vspace{-2pt}
\caption{All possible actions of the group $C_4$ on a 2D image of a circle. Each quadrant is labeled from $1-4$.}
\label{fig:2d_circle}
  \vspace{-40pt}
\end{wrapfigure}

How can we build normalizing flows that are capable of exploiting the rich underlying geometry found in data modalities like images? We leverage the theory of steerable CNNs \CITE which provides a thorough treatment of building equivariant linear maps on homogenous spaces. 
More prominently, it allows us to reason over different types of geometric structures present in data in the language of associated vector bundles. For example, in the case of equivariant generative modelling of images, we can consider $\mathbb{R}^2$ to be the base space and each channel (e.g. RGB) as a scalar field that transforms independently.

}

\section{Constructing $G$-Equivariant Normalizing Flows}
The existence of a $G$-equivariant diffeomorphism---while providing a solid theoretical foundation---gives no instruction on how we can easily construct such maps in practice. We now outline and demonstrate practical methods for building $G$-equivariant maps in the language of discrete normalizing flows. Specifically, we are interested in crafting normalizing flow models that take an invariant prior density to an invariant target density where the invariance properties are known \textit{a priori}. More precisely, this means that each invertible function, $f_i: \mathbb{R}^n \to \mathbb{R}^n$, in our flow must additionally be an equivariant map with respect to a prescribed group $G$ acting on $\mathbb{R}^n$. 
Mathematically, this requires each $f_i$ to satisfy the following transformation law:
\begin{equation*}
    f_i (R_g x) = T_g f_i(x), \quad \forall x \in \mathbb{R}^n, \forall g \in G , 
\end{equation*}
where $R_g$ and $T_g$ are two representations of the group element $g \in G$. 

For the remainder of the paper we will take the group $G$ to be a subgroup of the Euclidean group in $n$-dimensions $E(n) \cong (\mathbb{R}^n, +) \rtimes O(n)$ which can be constructed from the translation group and the orthogonal group in $n$-dimensions via the semi-direct product. For instance, natural images---despite being objects in $\mathbb{R}^n$---may transform along isometries of the plane $\mathbb{R}^2$ (e.g. rotations, reflections, and translations) which are captured by the group $E(2)$. Consequently, to understand how data transforms under prescribed equivariance constraints it is equally important to understand how data can be best represented. In~\S\ref{sec:exploiting_geometric_struct} we outline one possible avenue for representing data as a combination of a chosen base manifold such as $\mathbb{R}^2$ and natural fibers (e.g. channels in an image) that assign a value to every point on the base. We then exploit this construction in~\S\ref{g_coupling_flows} and~\S\ref{g_residual_flows} to build $G$-Affine coupling and $G$-residual flows respectively while proving the universality of the latter in~\S\ref{sec:representation_residual_flows_main}. Finally, we close the section with a construction of linear equivariant maps which are more of theoretical than practical interest.

\subsection{Exploiting Geometric Structure}
\label{sec:exploiting_geometric_struct}
\begin{wrapfigure}{o}{0.5\textwidth}
 \vspace{-35pt}
  \begin{center}
    \includegraphics[width=0.9\linewidth]{figures/equivariant_flows_toy.pdf}
  \end{center}
 \vspace{-10pt}
\caption{All possible actions of the group $C_4$ on a 2D image of a circle. Each quadrant is labelled from $1-4$.}
\label{fig:2d_circle}
\end{wrapfigure}

In classical approaches to normalizing flows including prior work on equivariant CNFs it is customary to treat each input, as a point residing in some $n$-dimensional space by vectorizing the input $\hat{x} = \textsc{Vec}(x)$. However, a seemingly innocuous operation like vectorizing, without corresponding constraints on the model, also destroys exploitable geometric structure within the data---e.g. rotation equivariance---hampering the overall generative modelling task. Fig. \ref{fig:2d_circle} illustrates this phenomena on an image of a planar circle which is rotated by $\pi/2$ rotations. Clearly, a planar-circle is invariant to any continuous rotations in $SO(2)$, but when discretized into four quadrants transforms according to the finite subgroup $C_4$. We can now treat this $2d$-planar circle as a point in $\mathbb{R}^4$ by discretizing and labelling each quadrant from $1$ to $4$. Now any rotation in $\mathbb{R}^4$ can be modelled as an element $g \in SO(4)$, but notice that the action of $\pi/2$ planar rotations corresponds to specific permutation labels for the quadrants---i.e. $R_g \in GL(4)$ is a permutation matrix. In fact, these are the \emph{only} permissible permutations out of a possible of $4!$ permutations. This important observation dictates that any equivariant normalizing flow should assign equal likelihoods of the $2d$-planar circle to all transformations under $C_4$ but not all possible permutations of the quadrants. 

How can we build normalizing flows that are capable of exploiting the rich underlying geometry found in data modalities like images? We leverage the theory of steerable CNNs \citep{cohen2016steerable} which provides a thorough treatment of building equivariant linear maps on homogenous spaces. 
More prominently, it allows us to reason over different types of geometric structures present in data in the language of associated vector bundles. For example, in the case of equivariant generative modelling of images, we can consider $\mathbb{R}^2$ to be the base space and each channel (e.g. RGB) as a scalar field that transforms independently. In fact, the theory of steerable G-CNN's is applicable not just to scalar fields but also to vector and tensor fields making them an ideal tool to study general equivariant maps. The principle benefit of this design choice is that any equivariance constraint on a given layer can be represented as a convolution kernel with an analogous constraint \cut{whose basis can be precomputed using existing libraries} \citep{weiler2019general}. We now introduce two novel equivariant layers that serve as building blocks to constructing $G$-equivariant normalizing flows.

\subsection{$G$-Residual Normalizing Flows}
\label{g_residual_flows}
As demonstrated we can convert regular coupling transformations to be equivariant by modifying a vanilla coupling layer to a $G$-coupling layer. This raises the question of whether such a layer exists for the other families of normalizing flows, namely Residual Flows. We now show that this is indeed possible by introducing a $G$-Residual layer. Let us consider residual networks of the form:
\begin{equation*}
    \phi^i(x) = x + h^i(x)
\end{equation*}
where each sub-network  $h^i: \mathbb{R}^d \to \mathbb{R}^d$ at layer $i$ is parametrized by it's weight matrix $W^i$. A composition of maps $\phi^i$'s ---i.e. $\phi = (\phi^1 \circ \phi^2 \circ \dots \circ \phi^T)$ is then a deep residual network. If the map $h$ also satisfies the condition $\textnormal{Lip}(h) < 1$ then each $\phi$ is also invertible \cite{behrmann2019invertible}. If $h$ is additionally equivariant with respect to a group $G$ then the map $\phi^i$ ---and by extension $\phi$--- is also an equivariant map and we call the equivariant map $\phi^i$ a $G$-Residual layer. 

\begin{mdframed}[style=MyFrame2]
\begin{prop}
Let $\phi^i: \mathbb{R}^n \to \mathbb{R}^n$ be a $G$-residual layer as defined above. Let $R$ be a representation for the group $G$. If $h^i:  \mathbb{R}^n \to \mathbb{R}^n$ is a $G$-equivariant map then, $\phi^i$ is also a $G$-equivariant map ---i.e. $\phi^i(R_g x) = R_g \phi^i(x)$ with respect to the group $G$.
\end{prop}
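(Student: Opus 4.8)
The plan is to verify the equivariance condition directly from the definition of the residual layer, relying on two facts: the equivariance of the sub-network $h^i$ and the \emph{linearity} of the representation $R_g$. Since $\phi^i(x) = x + h^i(x)$, I would start by substituting the transformed input $R_g x$ into this definition, obtaining $\phi^i(R_g x) = R_g x + h^i(R_g x)$.

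Next I would invoke the hypothesis that $h^i$ is $G$-equivariant, which by Definition~1 gives $h^i(R_g x) = R_g h^i(x)$. The one subtlety worth flagging is that the residual connection $x + h^i(x)$ only makes sense when the input space and the output space of $h^i$ are identified as the same $G$-space; this is why the proposition uses a single representation $R_g$ (rather than a pair $R_g, T_g$) on both sides, and it is precisely this matching that lets the two summands transform under the \emph{same} representation. I would substitute to get
\begin{equation*}
    \phi^i(R_g x) = R_g x + R_g h^i(x).
\end{equation*}

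Finally, because $R_g \in GL(n)$ acts linearly on $\mathbb{R}^n$, I can factor the representation out of the sum: $R_g x + R_g h^i(x) = R_g\bigl(x + h^i(x)\bigr) = R_g \phi^i(x)$, which is exactly the claimed transformation law $\phi^i(R_g x) = R_g \phi^i(x)$ for all $x \in \mathbb{R}^n$ and $g \in G$. Chaining this over the composition $\phi = \phi^1 \circ \cdots \circ \phi^T$ then shows the full residual flow is equivariant, since a composition of equivariant maps is equivariant.

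There is no genuine obstacle here: the statement is a one-line consequence of linearity of the group representation combined with the equivariance of $h^i$. The only point requiring care is conceptual rather than technical, namely ensuring that the identity branch and the residual branch transform under the same representation so that the addition is compatible with the $G$-action; once this is noted, the computation is immediate.
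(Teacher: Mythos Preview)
Your proposal is correct and follows essentially the same approach as the paper: substitute $R_g x$ into the residual layer, apply the equivariance of $h^i$, and then use the linearity of $R_g$ to factor it out of the sum. Your additional remarks about the identity and residual branches needing to transform under the same representation, and about equivariance being preserved under composition, are valid observations that the paper leaves implicit.
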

\end{mdframed}
\begin{proof}
We begin by writing the result of first transforming the input to a $G$-residual layer as follows:
\begin{align}
    \phi^i(R_g x) & = R_g x + h^i ( R_g x)  \\
             & =  R_g x + R_g h^i(x) \label{h_equivariance_cond}\\ 
             & = R_g (x + h^i(x)) \\
             & = R_g \phi^i(x)
\end{align}
\end{proof}
It is important to note that our chosen way of enforcing invertibility by controlling the Lipschitz constant of every layer is fully compatible with equivariance. That is point-wise multiplication on the weight matrices with $c/\sigma$ commutes with any group representation $R_g$.

\subsection{Representation Capabilities of $G$-Residual Flows}
\label{sec:representation_residual_flows_main}
For any $G$-equivariant flow it is natural to ask about its representational power in the space of functions and distributions. Unfortunately, it is well known that i-resnets, neural ODEs, and as a result, conventional residual flows are unable to approximate any diffeomorphism without the use of auxiliary dimensions ---i.e. zero-padding \citep{zhang2020approximation, dupont2019augmented}. For example, the function $f(x) = -x$ cannot be approximated by an i-resnet. An analogous question for $G$-residual flows is whether they can approximate any $G$-equivariant diffeomorphism on $\mathbb{R}^n$. We now show it is possible to construct---in a similar manner to \cite{zhang2020approximation}---a $G$-Residual Flow to exactly build any $G$-equivariant diffeomorphism. To do so, we first extend the action of $G$ to the augmented space in a trivial manner.

Given an action of a group $G$ on a vector space $V$, then for any other vector space $V'$, we can define a $G$-action on $V \oplus V'$ by: $g \cdot [v,v'] \mapsto [g\cdot v, v']$. This helps us to extend a $G$-action to the padded space, where $V = V' = \mathbb{R}^n$. We can then  extend Theorem~6 of \cite{zhang2020approximation} to the equivariant case.

\begin{mdframed}[style=MyFrame2]
\begin{theorem}
\label{thm:rep_g_res_flow}
Assume that $G$ acts on $\mathbb{R}^n$. For any $G$-equivariant Lipschitz continuous diffeomorphism $\phi : \mathbb{R}^n \to \mathbb{R}^n$  there exists a $G$-residual flow on the padded space  $\psi: \mathbb{R}^{2n} \to \mathbb{R}^{2n}$, where the $G$ action is extended to the padding of $\mathbb{R}^n$ as above, such that $\psi([x,0]) = [\phi(x), 0] $ for any $x \in \mathbb{R}^n$.
\end{theorem}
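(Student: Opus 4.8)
The plan is to adapt the augmentation construction of \cite{zhang2020approximation} (their Theorem~6) to the equivariant setting; the only genuinely new ingredient is that every elementary map in the construction must be made $G$-equivariant. The overall strategy is to realize $\psi$ as a finite composition of \emph{shear} maps on the padded space $\mathbb{R}^{2n}=\{[u,v]:u,v\in\mathbb{R}^n\}$, to show each shear is itself a composition of many residual layers of Lipschitz constant below one, and finally to observe that $G$-equivariance of $\phi$ forces each of these layers to be $G$-equivariant.

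\textbf{Step 1 (shear skeleton).} I would first exhibit the explicit composition
\[
[x,0] \xrightarrow{\,S_1\,} [x,\phi(x)] \xrightarrow{\,S_2\,} [0,\phi(x)] \xrightarrow{\,S_3\,} [\phi(x),\phi(x)] \xrightarrow{\,S_4\,} [\phi(x),0],
\]
where $S_1:[u,v]\mapsto[u,v+\phi(u)]$, $S_2:[u,v]\mapsto[u-\phi^{-1}(v),v]$, $S_3:[u,v]\mapsto[u+v,v]$, and $S_4:[u,v]\mapsto[u,v-u]$. Setting $\psi=S_4\circ S_3\circ S_2\circ S_1$ gives $\psi([x,0])=[\phi(x),0]$ exactly as required. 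Note $S_2$ uses $\phi^{-1}$, which exists and is a diffeomorphism because $\phi$ is; here I would use that $\phi$ is bi-Lipschitz so that $\phi^{-1}$ is Lipschitz as well.

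\textbf{Step 2 (Lipschitz reduction).} Each shear is a single residual map $\mathrm{id}+h$, but its $h$ (e.g. $h=[0,\phi(u)]$ for $S_1$, or the linear $h=[v,0]$ for $S_3$) need not be a contraction. I would split each shear into $N$ identical smaller residual layers: for $S_1$, repeat $[u,v]\mapsto[u,v+\phi(u)/N]$ exactly $N$ times, which composes to $v\mapsto v+\phi(u)$ since $u$ is never touched, while each layer has $\mathrm{Lip}(h)=\mathrm{Lip}(\phi)/N<1$ for $N$ large. Applying the same subdivision to $S_2,S_3,S_4$ (using $\mathrm{Lip}(\phi^{-1})$ and the operator norm $1$ of the linear parts) produces residual layers with Lipschitz constant below one, so $\psi$ is a genuine finite residual flow in the sense of \S\ref{g_residual_flows}.

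\textbf{Step 3 (equivariance) and the main obstacle.} This is the crux of the extension. I would equip the padded coordinate with a copy of the representation $R_g$, so that $g\cdot[u,v]=[R_g u,R_g v]$; since $R_g 0=0$, this agrees with the trivial extension on the subspace $\{[x,0]\}$ on which the theorem is evaluated, so the statement is unaffected. For each layer $\mathrm{id}+h$ I then verify $h(g\cdot[u,v])=g\cdot h([u,v])$: for $S_1$ this reads $[0,\phi(R_g u)]=[0,R_g\phi(u)]$, which holds precisely by $G$-equivariance of $\phi$; for $S_2$ it uses $\phi^{-1}(R_g v)=R_g\phi^{-1}(v)$, which follows because the inverse of a $G$-equivariant bijection is itself $G$-equivariant; and for the linear shears $S_3,S_4$ it is immediate from linearity of $R_g$. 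Scaling by $1/N$ commutes with $R_g$, so the subdivided layers stay equivariant, and a composition of equivariant residual layers is an equivariant residual flow. The delicate point—and the reason the representation on the padding cannot be taken trivial—is exactly that the auxiliary operations which load $\phi$ into the scratch coordinate ($S_1$) and later unload it through $\phi^{-1}$ ($S_2$) must both be equivariant; this is impossible for a non-invariant $\phi$ unless the padding carries the same representation. The secondary technical point is the bi-Lipschitz assumption on $\phi$, needed so that $\mathrm{Lip}(\phi^{-1})<\infty$ and the subdivision in Step~2 genuinely drives every layer below Lipschitz constant one.
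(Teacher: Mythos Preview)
Your route differs from the paper's in the skeleton and, as a consequence, imports an assumption the theorem does not give you.

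\textbf{The paper's construction.} The paper (following \cite{zhang2020approximation}) never calls $\phi^{-1}$. It sets $\delta(x)=(\phi(x)-x)/T$ for an integer $T$ exceeding $\mathrm{Lip}(\phi)$, loads $\delta(x)$ into the padding with one contractive residual step $[u,v]\mapsto[u,v+\delta(u)]$, then applies the \emph{linear} shear $[u,v]\mapsto[u+\tfrac{T}{T+1}v,\,v]$ exactly $T{+}1$ times (each with Lipschitz constant $\tfrac{T}{T+1}<1$) to carry the first block from $x$ to $x+T\delta(x)=\phi(x)$, and finally clears the padding with one more $\delta$-step. Only $\mathrm{Lip}(\phi)$ ever enters.

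\textbf{The gap in yours.} Your $S_2$ needs $\mathrm{Lip}(\phi^{-1})<\infty$ to be subdivided into contractive pieces, and that does not follow from ``Lipschitz diffeomorphism'': on $\mathbb{R}$ take any smooth $\phi$ with $0<\phi'\le 1$, $\inf\phi'=0$, and both tail integrals of $\phi'$ diverging; then $\phi$ is a $1$-Lipschitz diffeomorphism onto $\mathbb{R}$ but $(\phi^{-1})'=1/(\phi'\circ\phi^{-1})$ is unbounded. So what you call a ``secondary technical point'' is an extra hypothesis not granted by the statement. The $\delta$-trick exists precisely to replace the nonlinear clearing shear $S_2$ by repeated linear shears whose contractivity is independent of $\phi$. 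Either adopt that construction or explicitly add bi-Lipschitzness to the hypotheses.

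\textbf{On the $G$-action you are in fact more careful than the paper.} The theorem extends $G$ trivially to the padding, $g\cdot[u,v]=[R_gu,v]$, and the appendix only verifies $\psi(g\cdot[x,0])=g\cdot\psi([x,0])$, i.e.\ equivariance of the \emph{composite} on the zero-padded slice. It does not (and under the trivial extension cannot) show each $\psi_i$ is a $G$-residual layer in the sense of \S\ref{g_residual_flows}: for instance $[u,v]\mapsto[u,v+\delta(u)]$ would require $\delta$ to be $G$-invariant rather than equivariant. Your diagonal extension $g\cdot[u,v]=[R_gu,R_gv]$ is exactly what makes layer-wise equivariance go through, and since the two actions agree on $\{[x,0]\}$ the conclusion of the theorem is unaffected; but note that this is a (silent) correction of the theorem's stated action, not merely of the proof.
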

\end{mdframed}

\begin{proofsketch}
The full proof is outlined in~\S\ref{app:g_residual_layer}.
As  \cite{zhang2020approximation} show there exists a Residual flow on the padded space $\psi: \mathbb{R}^{2n} \to \mathbb{R}^{2n}$, such that $\psi([x,0]) = [\phi(x), 0] $. We must now show that this flow is equivariant with respect to the extended $G$-action on $\mathbb{R}^{2n}$. Indeed by $G$-equivariance of $\phi$ and the definition of the extended $G$-action we have:
\begin{align*}
    \psi(g\cdot[x,0]) & = \psi([g\cdot x, 0]) = [\phi(g\cdot x), 0 ] \\
                      & = [g\cdot \phi(x), 0 ] =  g\cdot [\phi(x), 0] \\
                      & = g\cdot \psi([x,0]) .
\end{align*}
\end{proofsketch}
Theorem \ref{thm:rep_g_res_flow} explicates that a $G$-residual operating on an augmented space is sufficiently powerful to exactly map a $G$-invariant prior to a $G$-invariant target. What Theorem \ref{thm:rep_g_res_flow} does not say, however, is the ease of optimization and the resulting training dynamics due to the interplay of the Lipschitz and equivariance constraints which are needed for invertibility and guaranteeing the pushforward to be an invariant density.

\subsection{$G$-Coupling Normalizing Flows}
\label{g_coupling_flows}
Coupling layers form the backbone of countless normalizing flow architectures due to their simplicity in computing the change in volume as well as having an analytic inverse. Thus it is tempting to consider whether such coupling layers can be made $G$-equivariant. To construct such a flow  we need to impose some restrictions on the flow function and the representation of the group. We outline these in the following proposition below.

\cut{
\begin{mdframed}[style=MyFrame2]
\begin{prop}
Let $\mathcal{C}: \mathbb{R}^n \to \mathbb{R}^n$ be a coupling layer with functions $s$ and $t$ as defined above. Also let $R$ be a $n$-dimensional representation of the group $G$. If $s$ is $G$-invariant and $t$ is $G$-equivariant then the coupling layer $\mathcal{C}$ is $G$-equivariant.
\end{prop}
\end{mdframed}

\begin{proof}
The equivariance condition for the flow 
($R_g \mathcal{C}(x)= \mathcal{C}(R_g x)$, $\forall g \in G$) written explicitly gives:
\small
\begin{equation}
\label{e:equiv_coupl}
R_g \mathcal{C}(x)
     =
     \begin{pmatrix}
     (R_gx)_{1:d} \\ 
     (R_gx)_{d+1:n} \odot \textnormal{exp}(s((R_gx)_{1:d})) + t((R_gx)_{1:d}) 
     \end{pmatrix}
\end{equation}
\normalsize
From \eqref{e:equiv_coupl} we already see that the representation $R$ cannot be irreducible. For simplicity, let us consider completely reducible representation $R = R^d\oplus R^{n-d}: G \to GL_d \times GL_{n-d}$ \footnote{Here $GL_n$ is a group of invertible $n\times n$ matrices}. Equation \ref{e:equiv_coupl} also outlines which restrictions are sufficient on the functions $t, \ s: \mathbb{R}^{d} \to \mathbb{R}^{n-d}$ to obtain an equivariant coupling flow. Using the complete reducibility assumption one has:
\small
\begin{align*}
R_g^{n-d} \left( \mathcal{C}_2(x)  \right) =  R_g^{n-d}x_{d+1:n} \odot \textnormal{exp}(s(R_g^dx_{1:d}))
& + t(R_g^dx_{1:d}).
\end{align*}
\normalsize
Where $\mathcal{C}_2$ is a coupling layer operating on the second partition of the input vector as described in \eqref{eqn:coupling_layer_def}.
Thus, it is sufficient to take $t$ to be $G$-equivariant (i.e., $R_g^{n-d}t(x) = t(R_g^dx)$), and  $s$ to be $G$-invariant. 
\end{proof}
}

\begin{mdframed}[style=MyFrame2]
\begin{prop}
Let $\mathcal{C}: \mathbb{R}^n \to \mathbb{R}^n$ be a coupling layer with functions $s$ and $t$ as defined as follows.
\begin{align}
     y_{1:d} & = x_{1:d} \\
    \label{eqn:equivar_coupling_layer_def}
    y_{d+1:D} & = x_{d+1:D} \odot s(x_{1:d}) + t(x_{1:d}).
\end{align}
Also let $R$ be a $n$-dimensional representation of the group $G$. 
Assume that $n=2d$ and $R$ is a completely reducible diagonal permutation representation: $R(g) = (R_g,R_g)$, where  $R_g$ is a permutation $d\times d$ matrix. 
If $s$ and $t$ are $G$-equivariant then the coupling layer $\mathcal{C}$ is $G$-equivariant.
\end{prop}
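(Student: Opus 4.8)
The plan is to verify the equivariance identity $\mathcal{C}(R(g)x) = R(g)\mathcal{C}(x)$ by direct computation, block by block, exploiting the diagonal structure $R(g) = (R_g, R_g)$. First I would write $x = [x_{1:d}, x_{d+1:2d}]$ so that the action reads $R(g)x = [R_g x_{1:d}, R_g x_{d+1:2d}]$; the same permutation $R_g$ acts on each half. The first output block is immediate: since the coupling layer leaves the first partition unchanged and $(R(g)x)_{1:d} = R_g x_{1:d}$, we trivially get $(\mathcal{C}(R(g)x))_{1:d} = R_g (x)_{1:d} = (R(g)\mathcal{C}(x))_{1:d}$.

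The substance lies in the second block. Substituting $R(g)x$ into \eqref{eqn:equivar_coupling_layer_def} gives
\begin{equation*}
(\mathcal{C}(R(g)x))_{d+1:2d} = R_g x_{d+1:2d} \odot s(R_g x_{1:d}) + t(R_g x_{1:d}).
\end{equation*}
Next I would invoke the $G$-equivariance of $s$ and $t$, i.e. $s(R_g x_{1:d}) = R_g s(x_{1:d})$ and $t(R_g x_{1:d}) = R_g t(x_{1:d})$, to rewrite the right-hand side as $R_g x_{d+1:2d} \odot R_g s(x_{1:d}) + R_g t(x_{1:d})$.

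The key step---and the point where the permutation hypothesis is essential---is the identity $(R_g a) \odot (R_g b) = R_g(a \odot b)$, valid for a permutation matrix $R_g$. This holds because applying a permutation merely relabels coordinates while the Hadamard product acts coordinatewise, so reordering before or after the elementwise multiplication yields the same vector. Using it, the second block collapses to $R_g(x_{d+1:2d} \odot s(x_{1:d})) + R_g t(x_{1:d}) = R_g\bigl(x_{d+1:2d} \odot s(x_{1:d}) + t(x_{1:d})\bigr) = R_g (\mathcal{C}(x))_{d+1:2d}$. Reassembling the two blocks gives $\mathcal{C}(R(g)x) = [R_g (x)_{1:d}, R_g (\mathcal{C}(x))_{d+1:2d}] = R(g)\mathcal{C}(x)$, as required.

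I expect the only genuine obstacle to be justifying the commutation of $R_g$ with $\odot$; the remainder is mechanical bookkeeping across the two partitions. It is worth emphasizing \emph{why} this step forces the restriction to permutation representations: for a general linear $R_g$ the Hadamard product does not distribute over the matrix action, $(R_g a)\odot(R_g b) \neq R_g(a\odot b)$ in general, so the argument breaks down. This is precisely the reason the proposition assumes a diagonal permutation representation rather than an arbitrary completely reducible one, and it should be flagged as the conceptual crux of the result.
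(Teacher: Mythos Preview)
Your proof is correct and follows essentially the same approach as the paper's own proof: both split into the two blocks, handle the first block trivially, and for the second block invoke the equivariance of $s$ and $t$ together with the key permutation identity $R_g(a \odot b) = (R_g a)\odot(R_g b)$. Your write-up is in fact more explicit about the bookkeeping and about why the permutation hypothesis is essential, which the paper relegates to a remark after the proof.
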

\end{mdframed}

\begin{proof}
The equivariance condition for the flow 
$R(g) \mathcal{C}(x)= \mathcal{C}(R(g) x)$, $\forall g \in G$) written explicitly gives:
\small
\begin{equation}
\label{e:equiv_coupl}
R(g) \mathcal{C}(x)
     =
     \begin{pmatrix}
     (R_gx)_{1:d} \\ 
     (R_gx)_{d+1:n} \odot s((R_gx)_{1:d}) + t((R_gx)_{1:d}) 
     \end{pmatrix}
\end{equation}
\normalsize
  Equation \ref{e:equiv_coupl}  outlines which restrictions are sufficient on the functions $t, \ s: \mathbb{R}^{d} \to \mathbb{R}^{d}$ to obtain an equivariant coupling flow. Using the assumption on the representation $R$, one has:
\small
\begin{align*}
R_g \left( \mathcal{C}_2(x)  \right) =  R_gx_{d+1:n} \odot s(R_gx_{1:d})
& + t(R_gx_{1:d}).
\end{align*}
\normalsize
Where $\mathcal{C}_2$ is a coupling layer operating on the second partition of the input vector as described in \eqref{eqn:coupling_layer_def}.
Since permutation matrices satisfy the following identity $R_g (x \odot y) = (R_g x) \odot (R_g y)$, 
it is sufficient to take $s$ and $t$ to be $G$-equivariant. Using this and the fact that the identity is trivially equivariant the overall coupling layer $C$ is also equivariant.
\end{proof}

\begin{remark}
\cut{\marcus{This remark needs to be revised.  We don't need to remove $exp()$ for the above proof to work, just need that $s(R_g x) = R_g s(x)$, i.e., $s(\cdot)$ can include any non-linearity so long as it is equivariant to permutations.}}
From \eqref{e:equiv_coupl} we already see that the representation $R$ cannot be irreducible. To preserve equivariance 
we need a non-linearity equivariant to permutations (for example, element-wise $\text{exp}$ as in \eqref{eqn:coupling_layer_def}). 
Furthermore, because of the non-commutativity of the Hadamard product with general matrix multiplication, $R$ cannot be any representation. Indeed, this is the key rationale used by \citet{kohler2020equivariant} to justify the negative claim that $G$-equivariant coupling flows do not exist. However, as we prove if we employ the permutation representation then the Hadamard product is commutative and as result, we have a $G$-equivariant coupling layer. The immediate consequence of needing to use permutation representations is that $G$ must be finite which eliminates compact groups such as $SO(2)$. However, in practice due to discretization and aliasing of signals, finite subgroups of $E(2)$ under the regular representation---which are permutations themselves---are a large class of groups that can be modelled using $G$-coupling flows.

\end{remark}

Our proposed definition of the $G$-coupling layer can be seen as the most general equivariant coupling layer and is in fact a strict generalization of previous efforts when $G$ is taken as the permutation group \citep{rasul2019set,bilovs2021scalable}.
The $G$-coupling layer, while being equivariant, is limited by the fact the representation of the group $R$ cannot be irreducible. In practice, we can take representations of the group independently for each channel attached to the base space (e.g. RGB channels in an image where the base space is $\mathbb{R}^2$). However, when such a decoupling is not possible a $G$-coupling equivariant flow cannot learn the desired target density.

\cut{
Given a matrix Lie group $G$, we seek to define a map $T$ that is not only equivariant with respect to the group but also invertible. As compositions of equivariant maps are also equivariant the final normalizing flow can lead to an invariant target density through stacking multiple equivariant $T$ layers. We now introduce the $G$-coupling layer which elevates the standard coupling layer used in Euclidean RealNVP to be equivariant to the group $G$. A $G$-coupling layer is defined as, 

\begin{equation}
    y = b \odot x + (1 - b)\odot h(b \odot x),
\end{equation}

Central to the $G$-coupling layer is the map $h: \mathbb{R}^n \to \mathbb{R}^n$ which is taken to be equivariant with respect to the group $G$. Crucially, $h$ is not required to be invertible and thus any $G$-convolution layer can be used. Also, it is easy to see that the $G$-coupling layer is invertible much like the original euclidean coupling layer. We will now show that the $G$-coupling layer is also equivariant to $G$.

\begin{prop}
Let $T: \mathbb{R}^n \to \mathbb{R}^n$ be a $G$-coupling layer as defined above with a $G$-equivariant map $h:  \mathbb{R}^n \to \mathbb{R}^n$, where $G$ is taken to be a matrix Lie group. Also, let $R_g$ be a representation for some group element $g \in G$. Then, $T$ is a $G$-equivariant map ---i.e. $T(R_g x) = R_g T(x)$ with respect to the group $G$.
\end{prop}
\begin{proof}
We begin by writing the result of first transforming the input to a $G$-coupling layer as follows:
\begin{align}
    T(R_g x) & = b \odot R_g x + (1 - b)\odot h(b \odot R_g x) \\
             & = b \odot R_g x + (1 - b)\odot R_g h(b \odot x) \label{h_equivariance_cond}\\ 
             & = R_g (b \odot x) + R_g (1 - b)\odot h(b \odot x) \label{hadamard_commutativity} \\
             & = R_g (b \odot x + (1 - b)\odot h(b \odot x)) \\
             & = R_g T(x)
\end{align}
\end{proof}
Here, \eqref{h_equivariance_cond} follows from the definition of $h$ as a $G$-equivariant map while in \eqref{hadamard_commutativity} we use the commutativity of the Hadamard product. In practice, this means that both the scale and translation transforms $s$ and $t$ in the original RealNVP-based flow must be equivariant networks. 
}

\subsection{$G$-Inverse Autoregressive Flows}
In a similar manner to $G$-Coupling flows we can construct Inverse Autoregressive Flows that respect the desired symmetry constraint. However, unlike non-equivariant IAF models to bake in non-trivial symmetries, we consider $k$ equal partitions of the input which allows us to define a layer in a $G$-equivariant IAF flow.

\begin{mdframed}[style=MyFrame2]
\begin{prop}
Let $\mathcal{I}(x)_i: \mathbb{R}^d \to \mathbb{R}^d$ be the $i$-th block transformation of an IAF layer with scale and translation functions $s$ and $t$ as defined above. 
Also let $R$ be a $n$-dimensional representation of the group $G$. 
Assume that $n=k\cdot d$ and $R$ is completely reducible diagonal permutation representation: $R(g) = (R_{g_1},R_{g_2}, \cdots, R_{g_k})$, such  $R_{g_i}, i \in [k]$ is a permutation $d\times d$ matrix. 
If $s$ and $t$ are $G$-equivariant then the IAF layer $\mathcal{I}$ is $G$-equivariant.
\end{prop}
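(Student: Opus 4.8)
The plan is to follow the same strategy as the proof of the $G$-coupling proposition, adapting it to the autoregressive block structure. I will rely on two algebraic facts: first, that a permutation matrix distributes over the Hadamard product, $R_g(u \odot v) = (R_g u)\odot(R_g v)$; and second, that the completely reducible diagonal representation lets me treat the $k$ blocks independently, so that the $i$-th block of $R(g)x$ is exactly $R_{g_i} x_i$ and the sub-vector $(R(g)x)_{<i}$ is the image of $x_{<i}$ under the restricted representation $(R_{g_1},\dots,R_{g_{i-1}})$.

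First I would fix $g \in G$ and expand the $i$-th block of $\mathcal{I}(R(g)x)$ using the IAF update rule, obtaining $s_i((R(g)x)_{<i}) \odot (R(g)x)_i + t_i((R(g)x)_{<i})$. Using the block-diagonal action this becomes $s_i((R(g)x)_{<i}) \odot (R_{g_i}x_i) + t_i((R(g)x)_{<i})$. Next I would invoke the $G$-equivariance of $s_i$ and $t_i$: since these maps send $\mathbb{R}^{(i-1)d}\to\mathbb{R}^d$, equivariance means the input transforms under the restricted representation and the output under the $i$-th block representation, i.e. $s_i((R(g)x)_{<i}) = R_{g_i}\, s_i(x_{<i})$ and likewise for $t_i$. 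Substituting pulls $R_{g_i}$ out in front of both terms, yielding $(R_{g_i}s_i(x_{<i}))\odot(R_{g_i}x_i) + R_{g_i}t_i(x_{<i})$.

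Finally I would apply the Hadamard/permutation identity to rewrite $(R_{g_i}s_i(x_{<i}))\odot(R_{g_i}x_i) = R_{g_i}(s_i(x_{<i})\odot x_i)$, and then factor $R_{g_i}$ out of the whole expression to obtain $R_{g_i}[s_i(x_{<i})\odot x_i + t_i(x_{<i})] = R_{g_i}\,\mathcal{I}(x)_i = (R(g)\mathcal{I}(x))_i$. Since this holds for each block $i \in [k]$, the full layer satisfies $\mathcal{I}(R(g)x) = R(g)\mathcal{I}(x)$, establishing $G$-equivariance.

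The main obstacle --- really the only delicate point --- is stating precisely the equivariance condition that $s_i$ and $t_i$ must satisfy, since the autoregressive structure makes them intertwiners between the restricted representation on blocks $<i$ and the representation on block $i$ rather than endomorphisms of a single space. Once that is set up correctly, and provided the triangular dependence on $x_{<i}$ is preserved so the Jacobian stays block-lower-triangular (as needed for tractable determinant computation), the remainder is a routine transcription of the coupling-layer argument, again crucially exploiting the commutation of permutation matrices with the Hadamard product.
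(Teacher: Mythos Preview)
Your proposal is correct and follows essentially the same approach as the paper, which merely sketches the argument as ``the $G$-equivariance of an Affine Coupling layer applied $k$-times for each partition $i \in [k]$ of the input to the IAF layer.'' You have supplied the details that the paper omits, and your identification of the one genuine subtlety---that $s_i$ and $t_i$ intertwine the restricted representation on the first $i-1$ blocks with the representation on block $i$---is exactly the point the paper glosses over.
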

\end{mdframed}
\begin{proofsketch} The $G$-equivariance of an Affine Coupling layer applied $k$-times for each partition $i \in [k]$ of the input to the IAF layer. 
\end{proofsketch}

\subsection{Invertible Equivariant Linear Maps}
\label{s:inv_eq_linear}

In the previous sections, we considered imbuing equivariance into the coupling and residual flows yet it is well known in the literature that a linear map that is equivariant must be a convolution with steerable kernels \citep{cohen2018general}. One may then ask if such maps can also be invertible enabling the construction of linear equivariant flows. Here we choose to present the theory in its abstract form with no insight into practical instantiations, and the reader may choose to skip this section and move directly to~\S\ref{sec:experiments} to avoid interruptions in the flow of exposition.

To achieve the goal of building linear equivariant maps let us first consider the space of all linear maps, $\text{Hom}(\mathcal{F}_n, \mathcal{F}_{n+1})$ not necessarily equivariant, between an arbitrary layers $n$ and $n+1$. The set of equivariant linear maps, also called intertwiners, forms a vector space and can be constructed under the following constraint on $\mathcal{H}:= \text{Hom}(\mathcal{F}_n, \mathcal{F}_{n+1})$,
\begin{equation*}
    \small
    \mathcal{H} : \{ f \in \text{Hom}(\mathcal{F}_n, \mathcal{F}_{n+1}) \mid f R_{n,g} = R_{n+1,g} f \quad \forall g \in G\}.
\end{equation*}

It is well known that a continuous linear map under mild assumptions can be written as a continuous kernel $\kappa: \mathbb{R}^n \times \mathbb{R}^n \to \mathbb{R}^{K_n \times K_{n+1}}$:
\begin{equation}
    \label{two_arg_kernel}
    [\kappa \cdot T] (x) = \int_{\mathbb{R}^n} \kappa(x,y) T(y) dy 
\end{equation}
Here $K_m$ and $K_{n+1}$ are the dimensionality of the feature (field) attached at each point in the respective layer's feature space. When $\mathcal{F}_n$ and $\mathcal{F}_{n+1}$ are taken to be induced representations the equivariance constraint results in a one-argument kernel which can be thought of a convolution like integral \citep{cohen2018general,weiler20183d,weiler2019general}. Thus the kernel is subject to the following linear constraint:
\begin{equation}
    \kappa(gx) = R_{n+1, g} \kappa (x) R_{n,g}^{-1},
    \label{kernel_constraint}
\end{equation}
where $R_{n,g}$ is a representation of $G$, and whenever numerically feasible the kernel can be built via basis functions using irreducible representations of $G$.

\xhdr{Invertible Kernels}
An observant reader may also recognize \eqref{two_arg_kernel} as the definition of an integral transform of which the input is a function. Such transformations need not be invertible but whenever an inverse exists the inverse transform must satisfy:

\begin{equation}
 \label{kernel_invertibility_constraint}
 \int_{\mathbb{R}^n} \kappa^{-1}(y-x) \kappa(y' - x) dy = \delta(y - y')
\end{equation}
A few well-known integral transforms include the Fourier and Laplace transform each of which is completely determined by the choice of kernel and interestingly both transforms are invertible. It is worth noting for invertibility to hold it is necessary for the kernel to be non-zero everywhere, for example, such a condition is needed in the standard convolution theorem for the Fourier transform. Thus these kernels not only satisfy the linear constraint in \eqref{kernel_constraint} but also the invertibility constraint \eqref{kernel_invertibility_constraint}.

\xhdr{Matrix Exponential Equivariant Flows}
An alternative to finding a kernel that is simultaneously equivariant and invertible is to impose invertibility on the overall operation. As demonstrated by \citet{hoogeboom2020convolution} when the convolution operation is expanded as matrix multiplication with an appropriately expanded kernel $z = \kappa \ast x = K \vec{x}$, acting on a vectorized input $\vec{x}$ invertibility corresponds to the invertibility of $K$.
In general expanded equivariant kernels need not be invertible but their matrix exponential of $K$ ---i.e. $e^K$ are guaranteed to be invertible and also preserves equivariance \cite{hoogeboom2020convolution,xiao2020generative}. Thus the matrix exponential-based flow is also an example of an equivariant linear flow as the matrix exponential must be numerically approximated using a truncated power series.


\section{Related Work}
Imbuing normalizing flow models with symmetries was first studied concurrently \citep{kohler2020equivariant} and \citep{rezende2019equivariant}. The former approach utilizes continuous normalizing flows (CNF) and is closest in spirit to the $G$-residual flows proposed in this paper. The second approach uses the Hamiltonian formalism from physics and decouples a system into its generalized position and momentum for which a finite-time invertible solution can be found
using Leapfrog-integration. As a result, such an approach can be seen as the specific but continuous time instantiation of the proposed $G$-coupling  flow. Flow-based generative models have also seen applications with regard to specific symmetry groups, most notably the symmetric group which manifests itself through permutation invariance. For example, when data is comprised of sets ---e.g. a dataset of $N$ point clouds--- and the permutation invariance is known as exchangeability of finite data. At present, the only flow-based models that handle this specific case utilize a coupling-based transform \citep{rasul2019set,bender2020exchangeable} but can be thought of as another instantiation of the $G$-coupling layer. Outside of permutations, equivariant flows have also been constructed for $E(n)$ \citep{satorras2021n} which in contrast to this work requires global equivariance and is more suitable to model application domains such as molecular dynamics where a global coordinate frame can be easily assigned.

Finally, the study of equivariances in theoretical physics has a long and celebrated history. Recently, the application of flow-based generative models has risen in prominence in the study of such systems like sampling in Lattice Gauge Theory \citep{kanwar2020equivariant}, the $SU(N)$ group \citep{boyda2020sampling}, and most recently \citet{katsman2021equivariant} construct equivariant manifold flows---extending euclidean space equivariance results in \citet{papamakarios2019normalizing}---for isometry groups of a given manifold.
\section{Experiments}
\label{sec:experiments}
We evaluate the ability of our Equivariant Discrete Flows on both synthetic toy datasets and higher dimensional image datasets in Rotation MNIST and CIFAR-10. For baselines, we use classical RealNVP-based coupling flows and Residual flows which are non-equivariant but are trained with heavy data augmentation sampled uniformly at random from the group.

\subsection{Synthetic Experiments}
We first consider a toy dataset of a mixture of $8$-gaussians and $4$-concentric rings in $\mathbb{R}^2$. The empirical distribution for each dataset contains both rotation and reflection symmetries and as a result, we would like the pushforward of an $G$-invariant prior (e.g. bivariant normal distribution) to remain invariant under the action of $G$. For fair comparison, we allocate a parameter budget of $~2$k to each model and train for $20$k steps using Adam with default hyperparameters \citep{kingma2014adam}. Note that, while conventional coupling flows could also be employed here their equivariant counterparts preserve equivariance along the channel dimensions, thus hindering their application on single channel data. In Fig \ref{fig:toy} we visualize the density of each for non-equivariant and equivariant residual flows for groups $C_{16}$ and $D_{16}$. We observe that the equivariance constraint, in such low parameter regimes, enables the learned density not only to respect the data symmetries but also to produce sharper transitions between high and low-density regions as found in the target density.

\begin{figure}[ht]
\begin{minipage}{.49\linewidth}
    \centering
    \includegraphics[width=1.0\linewidth]{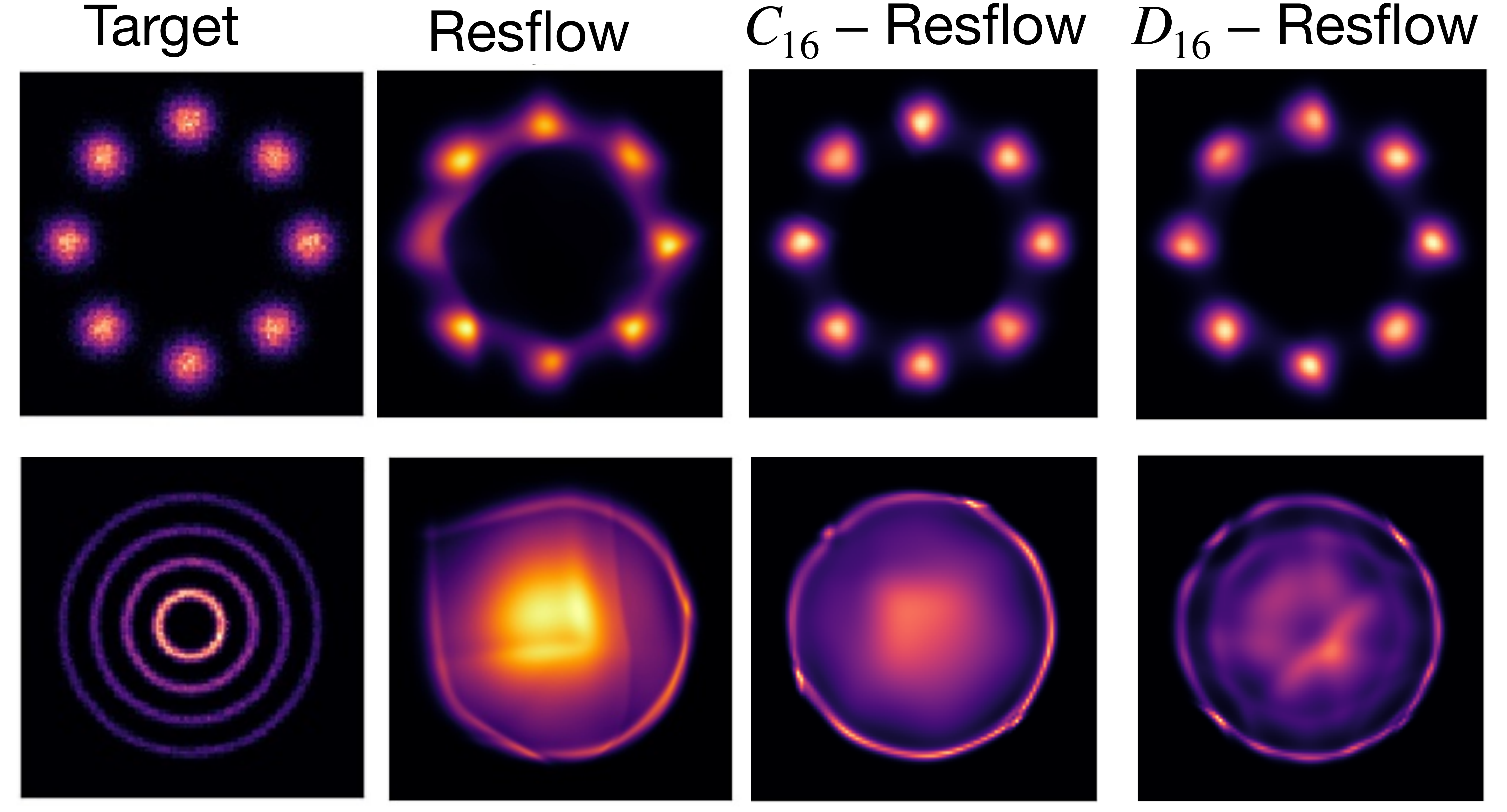}
    \caption{Density Estimation over Toy Data in $\mathbb{R}^2$ using non-equivariant and $G$-residual flows. For $8$ Gaussians we notice that $G$-equivariant Resflows have less smearing of the probability mass on individual modes. For the spirals dataset we observe that all Resflows with $2k$ parameters struggle to completely separate the high and low-density regions but $D_{16}$ Resflow is able to respect the data symmetry better.
    }
    \label{fig:toy}
    \vspace{-10pt}
\end{minipage}
\begin{minipage}{.49\linewidth}
\vspace{5pt}
\includegraphics[width=1.0\linewidth]{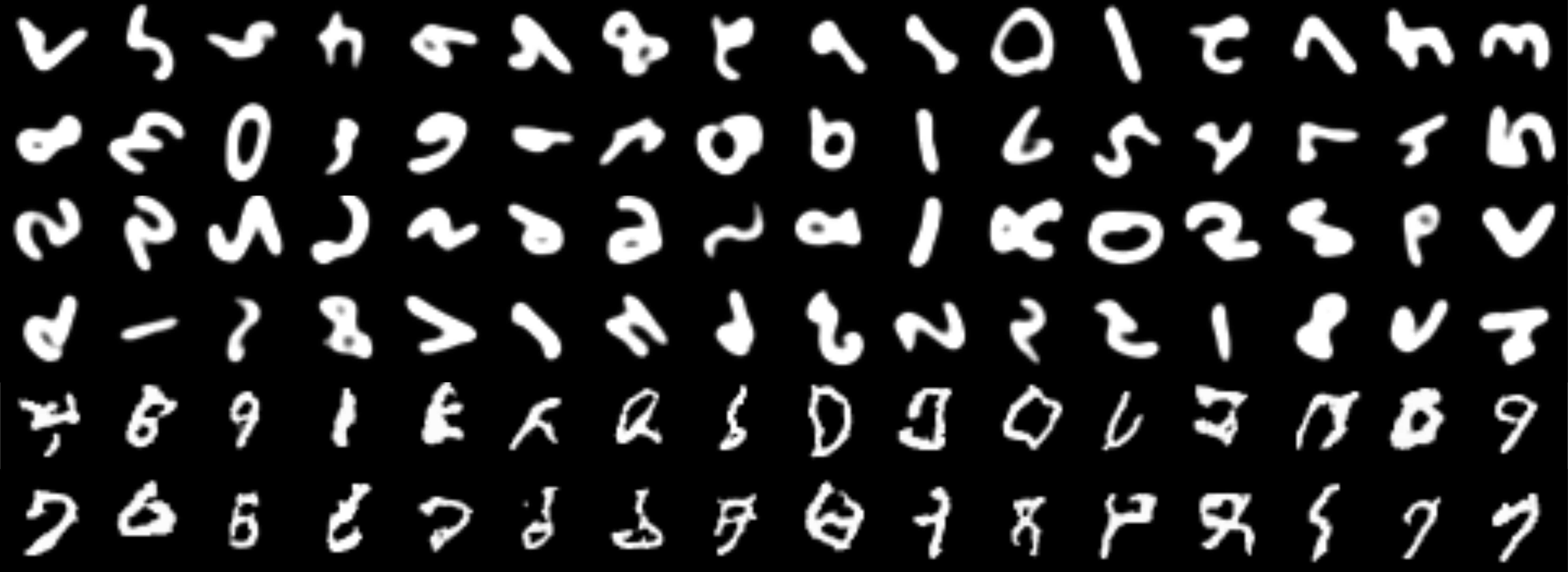}
    \caption{Generated samples from $G$-Equivariant Residual Flows. The top two rows illustrate samples from the group $D_8$ while the middle two rows the group is $C_{16}$. In the first four rows we notice rotated digits about the center of the image and a few reflections about the vertical axis. The bottom two rows illustrate the translation group $T$ implemented by considering the input space as a Torus and vanilla convolutions with circular padding. Here we notice generated digits splicing at the boundary of the image indicating translation symmetry.
    }
    \label{fig:gen_samples_mnist}
\end{minipage}
\end{figure}

\subsection{Image Datasets}

\begin{figure}[t]
    \centering
    \includegraphics[width=0.49\linewidth]{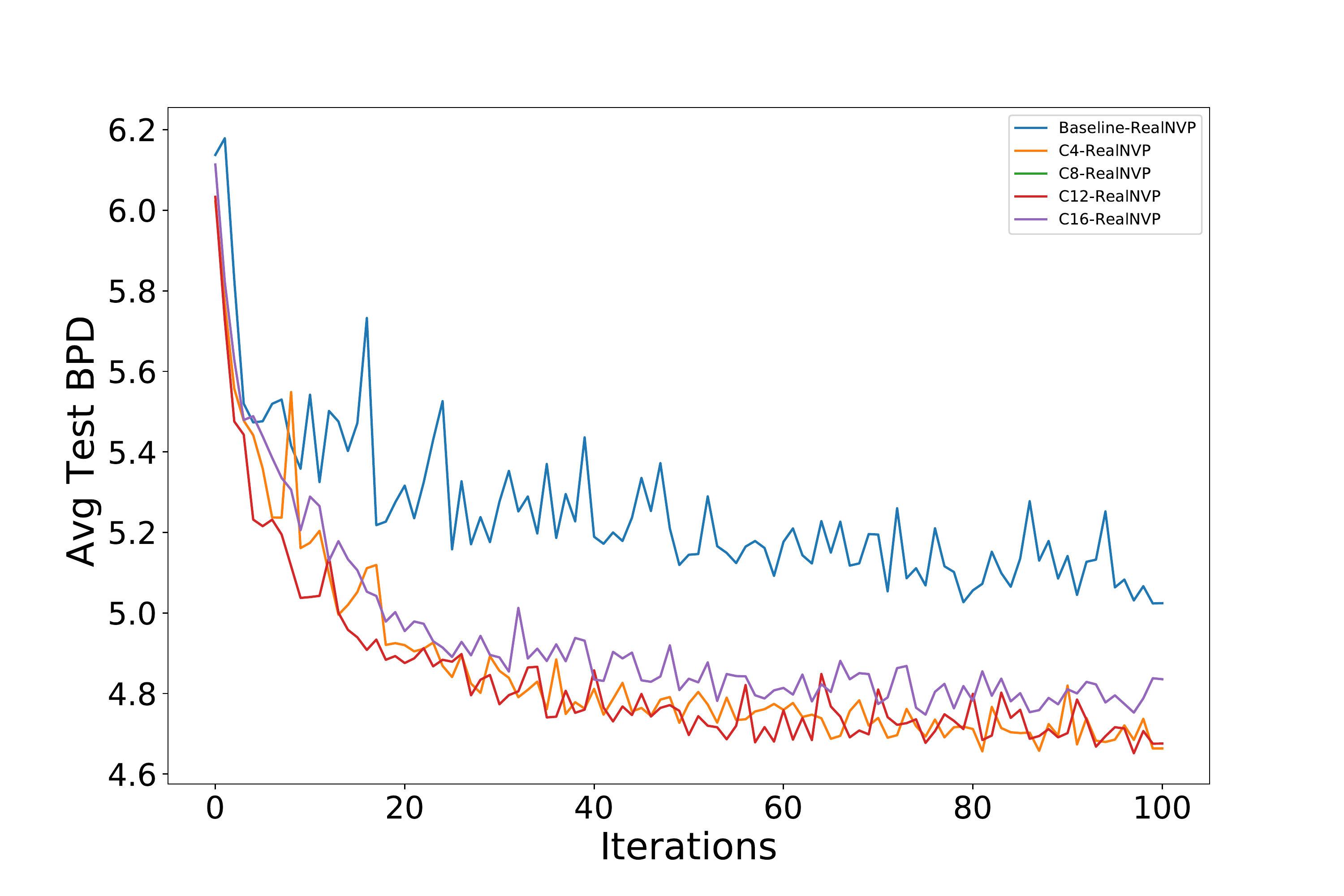}
    \includegraphics[width=0.49\linewidth]{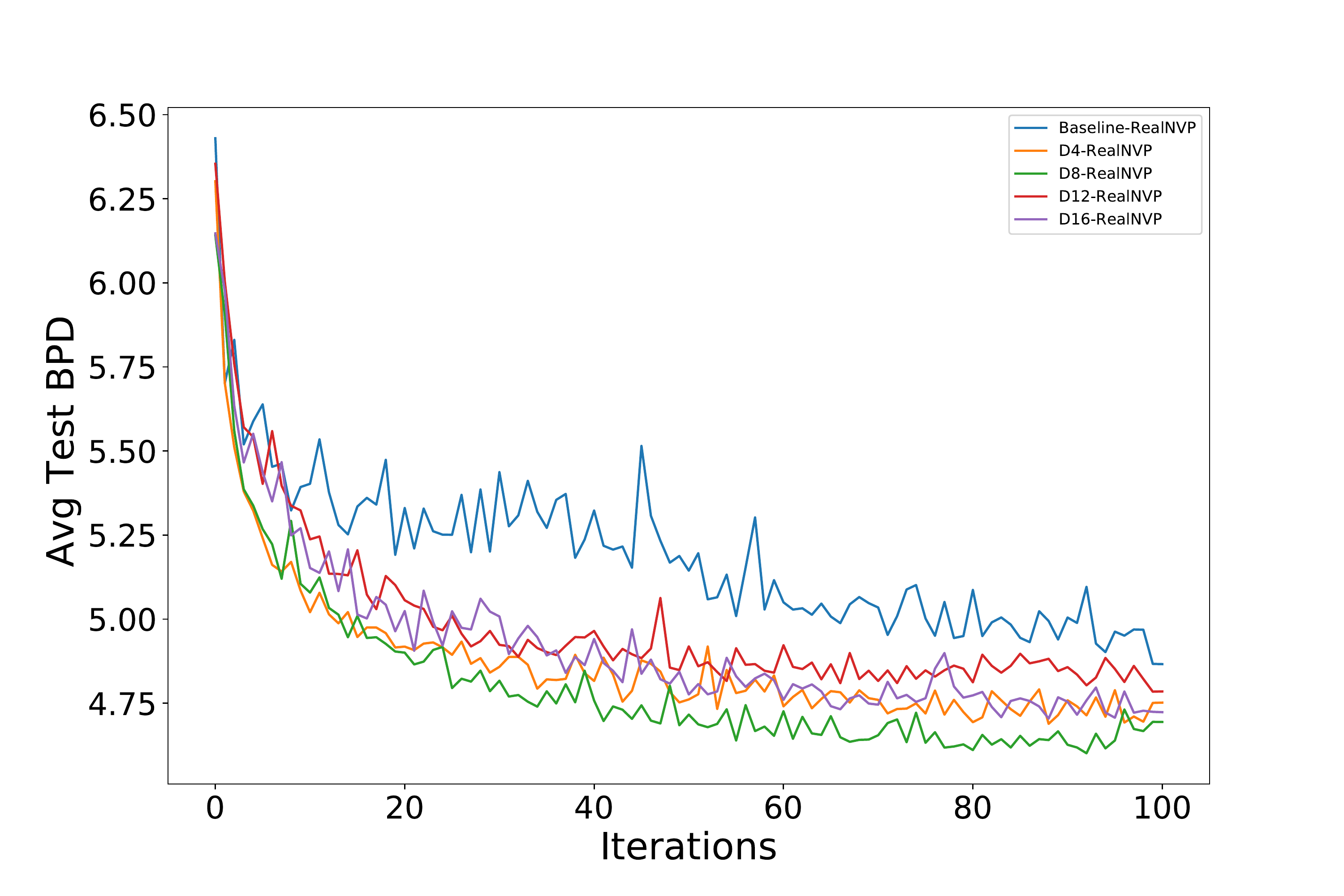}
    \caption{ Ablation study on augmented CIFAR-10 using $G$-Coupling flows with finite subgroup of $SO(2)$ (Left figure) and $O(2)$ (Right figure).}
    \label{fig:rot_realnvp}
    \vspace{-10pt}
\end{figure}

We now consider density estimation over images as found in Rotation MNIST and CIFAR-10. Note that CIFAR-10 by itself does not have rotation symmetry but reflection symmetry and as a result---for fair comparison---we construct an augmented dataset where each input is transformed via an element from $G$ uniformly at random \citep{weiler2019general}. In fig \ref{fig:gen_samples_mnist} we visualize generated samples from $G$-Residual Flows trained on three variations of the MNIST dataset equipped with $D_8, C_{16}$, and the translation group $T$ symmetries respectively. In table \ref{tab:main_results} we report bits per dim values for non-equivariant and $G$-residual flows as well their respective parameter counts. We observe that $G$-residual flows produce slightly worse bits per dim but compensate with large gains in parameter efficiency ---i.e. $227\%$ and $195\%$ respectively on Rotation MNIST and CIFAR-10. We hypothesize that $G$-residual flows while universal are harder to optimize due to the potentially destructive interference between the Lipschitz and equivariance constraints.

\begin{table}[ht]
\footnotesize
\caption{Density Estimation over images in Rotation MNIST and CIFAR-10. Each cell reports bits per dim as well as the total number of parameters used. }
\label{celeba:main}
 \begin{center}\begin{tabular}{ c c c c c c c}
 \toprule
 & Model & Rotation MNIST & CIFAR-10  \\
  \midrule
 & Resflow & $1.65 \mid 9.78 \times 10^3$ & $3.87 \mid 2.83\times 10^4$ & \\
 & $G$-Resflow & $2.07 \mid 2.99 \times 10^3$ & $4.04 \mid 9.57 \times 10^3$ &  \\
 \bottomrule
 \label{tab:main_results}
\end{tabular}\end{center} 
\vspace{-10pt}
\end{table}

For CIFAR-10 we additionally perform an ablation study using $G$-coupling flows for $SO(2)$ and $O(2)$. As the exact equivariance for $G$-coupling flows to permutation representations is a rigid constraint on model expressivity we implement soft equivariance using regular $G$-convolutions in place of vanilla convolutions.
We report the average group test bits per dimension which compute the mean bits per dimension assigned to a test datapoint across all elements in a finite subgroup (e.g. $C_{16}$) of a larger continuous group. Fig. \ref{fig:rot_realnvp} illustrates $G$-coupling flow models on finite subgroups of $SO(2)$ while Fig. \ref{fig:rot_realnvp} right repeats the same procedure for $O(2)$.
As observed in both ablation experiments we find that all $G$-coupling flows converge significantly faster and to lower average group bits per dim than the non-equivariant coupling flow. Interestingly, we also observe increasing the equivariance constraint---e.g. $C_4$ vs. $C_{16}$---leads to improved test bits per dim values highlighting the benefits of modelling invariant densities using equivariant maps. 

\cut{
\begin{figure}[t]
    \centering
    \includegraphics[width=1.0\linewidth]{figures/FlipRotation_RealNVP.pdf}
    \caption{Ablation study on augmented CIFAR-10 using $G$-Coupling flows with a finite subgroup of $O(2)$.}
    \label{fig:fliprot_realnvp}
    \vspace{-10pt}
\end{figure}
}
\section{Conclusion}
In this paper, we study the problem of building equivariant diffeomorphisms on Euclidean spaces using finite layers. We first theoretically prove the existence of an equivariant map for compact groups with group actions on compact spaces laying principled foundations for the modelling of invariant densities using equivariant flows. 
We introduce Equivariant Finite Normalizing Flows which enable the construction of classical normalizing flows but with the added guarantee of equivariance to a group $G$. To build our EFNFs we introduce $G$-coupling and $G$-residual layers which elevate classical coupling and residual flows to their equivariant counterparts. Empirically, we find $G$-residual flows enjoy significant parameter efficiency but also lead to a small drop in performance in density estimation tasks over images. $G$-coupling flows on the other hand are limited in their applicability to domains which contain more than a single channel (e.g. RGB images) but achieve faster convergence to lower average test bits per dim on augmented CIFAR-10.
While we proved the existence of equivariant maps between invariant densities on compact spaces many data domains with symmetries are in fact non-compact and proving the existence of an equivariant map in this setting is a natural direction for future work. 

\cut{

\section{Default Notation}

In an attempt to encourage standardized notation, we have included the
notation file from the textbook, \textit{Deep Learning}
\cite{goodfellow2016deep} available at
\url{https://github.com/goodfeli/dlbook_notation/}.  Use of this style
is not required and can be disabled by commenting out
\texttt{math\_commands.tex}.

\centerline{\bf Numbers and Arrays}
\bgroup
\def\arraystretch{1.5}
\begin{tabular}{p{1in}p{3.25in}}
$\displaystyle a$ & A scalar (integer or real)\\
$\displaystyle \va$ & A vector\\
$\displaystyle \mA$ & A matrix\\
$\displaystyle \tA$ & A tensor\\
$\displaystyle \mI_n$ & Identity matrix with $n$ rows and $n$ columns\\
$\displaystyle \mI$ & Identity matrix with dimensionality implied by context\\
$\displaystyle \ve^{(i)}$ & Standard basis vector $[0,\dots,0,1,0,\dots,0]$ with a 1 at position $i$\\
$\displaystyle \text{diag}(\va)$ & A square, diagonal matrix with diagonal entries given by $\va$\\
$\displaystyle \ra$ & A scalar random variable\\
$\displaystyle \rva$ & A vector-valued random variable\\
$\displaystyle \rmA$ & A matrix-valued random variable\\
\end{tabular}
\egroup
\vspace{0.25cm}

\centerline{\bf Sets and Graphs}
\bgroup
\def\arraystretch{1.5}

\begin{tabular}{p{1.25in}p{3.25in}}
$\displaystyle \sA$ & A set\\
$\displaystyle \R$ & The set of real numbers \\
$\displaystyle \{0, 1\}$ & The set containing 0 and 1 \\
$\displaystyle \{0, 1, \dots, n \}$ & The set of all integers between $0$ and $n$\\
$\displaystyle [a, b]$ & The real interval including $a$ and $b$\\
$\displaystyle (a, b]$ & The real interval excluding $a$ but including $b$\\
$\displaystyle \sA \backslash \sB$ & Set subtraction, i.e., the set containing the elements of $\sA$ that are not in $\sB$\\
$\displaystyle \gG$ & A graph\\
$\displaystyle \parents_\gG(\ervx_i)$ & The parents of $\ervx_i$ in $\gG$
\end{tabular}
\vspace{0.25cm}

\centerline{\bf Indexing}
\bgroup
\def\arraystretch{1.5}

\begin{tabular}{p{1.25in}p{3.25in}}
$\displaystyle \eva_i$ & Element $i$ of vector $\va$, with indexing starting at 1 \\
$\displaystyle \eva_{-i}$ & All elements of vector $\va$ except for element $i$ \\
$\displaystyle \emA_{i,j}$ & Element $i, j$ of matrix $\mA$ \\
$\displaystyle \mA_{i, :}$ & Row $i$ of matrix $\mA$ \\
$\displaystyle \mA_{:, i}$ & Column $i$ of matrix $\mA$ \\
$\displaystyle \etA_{i, j, k}$ & Element $(i, j, k)$ of a 3-D tensor $\tA$\\
$\displaystyle \tA_{:, :, i}$ & 2-D slice of a 3-D tensor\\
$\displaystyle \erva_i$ & Element $i$ of the random vector $\rva$ \\
\end{tabular}
\egroup
\vspace{0.25cm}

\centerline{\bf Calculus}
\bgroup
\def\arraystretch{1.5}
\begin{tabular}{p{1.25in}p{3.25in}}
$\displaystyle\frac{d y} {d x}$ & Derivative of $y$ with respect to $x$\\ [2ex]
$\displaystyle \frac{\partial y} {\partial x} $ & Partial derivative of $y$ with respect to $x$ \\
$\displaystyle \nabla_\vx y $ & Gradient of $y$ with respect to $\vx$ \\
$\displaystyle \nabla_\mX y $ & Matrix derivatives of $y$ with respect to $\mX$ \\
$\displaystyle \nabla_\tX y $ & Tensor containing derivatives of $y$ with respect to $\tX$ \\
$\displaystyle \frac{\partial f}{\partial \vx} $ & Jacobian matrix $\mJ \in \R^{m\times n}$ of $f: \R^n \rightarrow \R^m$\\
$\displaystyle \nabla_\vx^2 f(\vx)\text{ or }\mH( f)(\vx)$ & The Hessian matrix of $f$ at input point $\vx$\\
$\displaystyle \int f(\vx) d\vx $ & Definite integral over the entire domain of $\vx$ \\
$\displaystyle \int_\sS f(\vx) d\vx$ & Definite integral with respect to $\vx$ over the set $\sS$ \\
\end{tabular}
\egroup
\vspace{0.25cm}

\centerline{\bf Probability and Information Theory}
\bgroup
\def\arraystretch{1.5}
\begin{tabular}{p{1.25in}p{3.25in}}
$\displaystyle P(\ra)$ & A probability distribution over a discrete variable\\
$\displaystyle p(\ra)$ & A probability distribution over a continuous variable, or over
a variable whose type has not been specified\\
$\displaystyle \ra \sim P$ & Random variable $\ra$ has distribution $P$\\
$\displaystyle  \E_{\rx\sim P} [ f(x) ]\text{ or } \E f(x)$ & Expectation of $f(x)$ with respect to $P(\rx)$ \\
$\displaystyle \Var(f(x)) $ &  Variance of $f(x)$ under $P(\rx)$ \\
$\displaystyle \Cov(f(x),g(x)) $ & Covariance of $f(x)$ and $g(x)$ under $P(\rx)$\\
$\displaystyle H(\rx) $ & Shannon entropy of the random variable $\rx$\\
$\displaystyle \KL ( P \Vert Q ) $ & Kullback-Leibler divergence of P and Q \\
$\displaystyle \mathcal{N} ( \vx ; \vmu , \mSigma)$ & Gaussian distribution %
over $\vx$ with mean $\vmu$ and covariance $\mSigma$ \\
\end{tabular}
\egroup
\vspace{0.25cm}

\centerline{\bf Functions}
\bgroup
\def\arraystretch{1.5}
\begin{tabular}{p{1.25in}p{3.25in}}
$\displaystyle f: \sA \rightarrow \sB$ & The function $f$ with domain $\sA$ and range $\sB$\\
$\displaystyle f \circ g $ & Composition of the functions $f$ and $g$ \\
  $\displaystyle f(\vx ; \vtheta) $ & A function of $\vx$ parametrized by $\vtheta$.
  (Sometimes we write $f(\vx)$ and omit the argument $\vtheta$ to lighten notation) \\
$\displaystyle \log x$ & Natural logarithm of $x$ \\
$\displaystyle \sigma(x)$ & Logistic sigmoid, $\displaystyle \frac{1} {1 + \exp(-x)}$ \\
$\displaystyle \zeta(x)$ & Softplus, $\log(1 + \exp(x))$ \\
$\displaystyle || \vx ||_p $ & $\normlp$ norm of $\vx$ \\
$\displaystyle || \vx || $ & $\normltwo$ norm of $\vx$ \\
$\displaystyle x^+$ & Positive part of $x$, i.e., $\max(0,x)$\\
$\displaystyle \1_\mathrm{condition}$ & is 1 if the condition is true, 0 otherwise\\
\end{tabular}
\egroup
\vspace{0.25cm}
}

\clearpage
\bibliography{bibliography}
\bibliographystyle{tmlr}

\appendix
\onecolumn

\section{Existence of the equivariant map}
\label{equivar_existence_proof}

\subsection{Proof of Theorem 2.}

\begin{proof}
Here we provide a bit more details to the sketch of the proof we gave earlier. The proof can be obtained as an equivariant modification of the Moser's trick \citep{Moser1965OnTV}. 

First, because of the conditions on $M$, the top cohomology group is one dimensional, hence:
\begin{equation}
    \int_M \mu = \int_M \nu \implies \ \exists \eta \in \Omega^{(n-1)}(M) \ \text{, s.t.} \ \nu = \mu + d\eta
\end{equation}

Without loss of generality we can assume that the form $\eta$ is $G$-invariant. Indeed, because of the naturality of the action (denote it $R$), one has: 
\begin{equation}
d R_g^* \eta = R_g^* d \eta = R_g^* (\nu - \mu) = \nu - \mu, \ \forall g \in G.
\end{equation}
The last equality holds because the forms $\nu$ and $\mu$ are given to be $G$-invariant. Then if we fix the Haar measure on $G$ (we can do it because $G$ is compact), we can average the form $\eta$, and consider this new form instead of $\eta$. By construction, it is $G$-invariant.

Then, connect the volume forms $\nu$ and $\mu$ by a segment:
$\mu_t = \mu + t d\eta $ for $t \in [0,1]$. Clearly, $\mu_0 = \mu$ and $\mu_1 = \nu$. 
We want to find an isotopy $\phi_t$, such that
\begin{equation}
\label{eq:isotopy_appendix}
 \phi_t^* \mu_t = \mu_0. 
\end{equation}
Indeed, if we substitute $t=1$ it will give the desired result.
As the manifold is compact, an isotopy can be generated by the flow of a time-dependent vector field $v_t$.
Let's differentiate \eqref{eq:isotopy_appendix} with respect to $t$. The right hand side will give zero, while the left hand side is:
\begin{equation}
    \frac{d}{dt}(\phi_t^* \mu_t) = \phi_t^*(\mathcal{L}_{v_t}\mu_t  + \frac{d}{dt} \mu_t ) ,
\end{equation}
where $\mathcal{L}$ is Lie derivative. Note that this equation is a chain rule. 
Recall the Cartan's formula: 
\begin{equation}
\mathcal{L}_v = d i_v + i_v d
\end{equation}
where $i_v$ is interior product. Using this, the fact that $d \mu_t = 0$ (because it is a top form) 
and the computation $\frac{d}{dt} \mu_t = d \eta$, we have:
\begin{equation}
    \frac{d}{dt}(\phi_t^* \mu_t) = \phi_t^*(d i_{v_t} \mu_t  + d \eta ) = \phi_t^*d( i_{v_t} \mu_t  +  \eta ) .
\end{equation}
This will be equal to zero, if
\begin{equation}
\label{eq:moser_appendix}
    i_{v_t} \mu_t + \eta = 0.
\end{equation}
Because $\mu_t$ is non-degenerate, we can solve \eqref{eq:moser_appendix} pointwise. As a result we obtain a unique smooth vector field $v_t$. The compactness of $M$ allows us to integrate $v_t$ in the flow $\phi_t$. 

Since $\mu_t$ and $\eta$ are $G$-invariant, so is the vector field $v_t$. The integration of $v_t$ will result in an $G$-equivariant diffeomorphism.
\end{proof}

\section{Representation Capabilities of $G$-Residual Layer}
\label{app:g_residual_layer}


\begin{proof}
First, we need reproduce the existential result of \cite{zhang2020approximation}.  
We show that for any Lipschitz continuous diffeomorphism $\phi : \mathbb{R}^n \to \mathbb{R}^n$ there exists a Residual flow on the padded space $\psi: \mathbb{R}^{2n} \to \mathbb{R}^{2n}$, such that $\psi([x,0]) = [\phi(x), 0] $.
Denote the Lipschitz constant of $\phi$ by $L$. Also denote $T = \lfloor L+1 \rfloor$. Consider the function: $\delta(x) = \frac{\phi(x) - x}{T}$. Clearly $\delta(x)$ is smooth and its Lipschitz constant is less than one. Hence, the map of the padded space $\mathbb{R}^{2n} \to \mathbb{R}^{2n}$ given by:
\begin{equation}
    \psi_0: [x, y] \mapsto [x, y]+ [0, \delta(x)] 
\end{equation}
is an  i-ResNet. In particular $\psi_0([x,0] = [x,\delta(x)]$.

Now let us consider the map 
\begin{equation}
    \psi_i: [x, y] \mapsto [x, y] + [\frac{yT}{T+1}, 0] , \ \text{where} \ i= 1, \dots, T+1.
\end{equation}
The residual part has Lipschitz constant less than one, hence this map is also an i-ResNet.
By definition of $\delta$, we have:
\begin{equation}
  \psi_{T+1}\circ \cdots  \psi_1\circ \psi_0 ([x,0]) = [\phi(x), \delta(x)].
\end{equation}
Finally, consider the map:
\begin{equation}
    \psi_{T+2}: [x, y] \mapsto [x, y]+ [0, -\delta(x)] .
\end{equation}
Similarly to $\psi_0$ it is an i-ResNet. 

Overall, denote the composition of all  $\psi_i$ where $i = 0, \dots, T+2$ by $\psi$. Then we showed that it is a ResFlow (as a composition of i-ResNets). And by construction, $\psi([x,0]) = [\phi(x), 0] $.

Now we must show that if  the diffeomorphism $\phi$ is additionally taken to be $G$-equivariant, then the ResFlow $\psi$ that we constructed above 
is also $G$-equivariant with respect to the extended $G$-action on $\mathbb{R}^{2n}$. 
Indeed by $G$-equivariance of $\phi$ and the definition of the extended $G$-action we have:
\begin{align*}
    \psi(g\cdot[x,0]) & = \psi([g\cdot x, 0]) = [\phi(g\cdot x), 0 ] \\
                      & = [g\cdot \phi(x), 0 ] =  g\cdot [\phi(x), 0] \\
                      & = g\cdot \psi([x,0]) .
\end{align*}
\end{proof}
\end{document}